\documentclass[oneside,a4paper,onecolumn,11pt]{article}

\usepackage{fullpage}
\usepackage[left=2cm,top=2cm,bottom=2cm,right=2cm,includehead,nomarginpar,headheight=16pt]{geometry}
\usepackage[ruled, linesnumbered, vlined, commentsnumbered]{algorithm2e}
\usepackage{graphicx,subfig} 
\usepackage{amsfonts,amssymb,amsmath,amsthm,amsopn,mathtools}	
\usepackage{booktabs,diagbox,colortbl,multirow,tabularx,threeparttable,hhline}
\usepackage[listings,skins,breakable]{tcolorbox}

\usepackage{fancyhdr,fancyheadings,nopageno,lastpage} 
\usepackage{url}
\usepackage{enumerate}
\usepackage[shortlabels]{enumitem}
\usepackage{csquotes}
\usepackage{authblk}
\usepackage{footnote}
\usepackage{hyperref}
\usepackage{prettyref}
\usepackage{cite}
\usepackage{setspace}
\usepackage{color}
\usepackage{xcolor}  
\usepackage{geometry}
\usepackage{academicons}
\usepackage{fontawesome5}
\usepackage{pifont,ifsym,marvosym,manfnt} 

\usepackage{tikz}
\usepackage{pgfplots}
\usetikzlibrary{positioning,shapes,shadows,arrows,calc}
\tikzstyle{component}=[rectangle, draw=black, rounded corners, fill=blue!40, drop shadow, text centered, anchor=north, text=white, minimum height=1cm]
\tikzstyle{arrow}=[->, thick]

\pgfplotsset{compat=1.12}
\usetikzlibrary{intersections}
\usetikzlibrary{pgfplots.statistics}
\usepgfplotslibrary{fillbetween}

\pagestyle{fancy}
\fancyhf[R]{}
\fancyfoot[C]{}
\fancyfoot[R]{\faLeanpub\ \, \thepage\ / \pageref{LastPage}}
\fancyfoot[L]{\faBraille\ \textsc{COLALab Report} \faSlackHash\ \footnotesize\textifsym{2024001}}
\fancypagestyle{plain}{%
    \fancyfoot[C]{}
    \fancyfoot[R]{\faLeanpub\ \, \thepage\ / \pageref{LastPage}}
    \fancyfoot[L]{\faBraille\ \textsc{COLALab Report} \faSlackHash\ \footnotesize\textifsym{2024001}}
}

\hypersetup{
    colorlinks=true,
    linkcolor=ultramarine,
    filecolor=magenta,
    urlcolor=ultramarine,
    pdftitle={Overleaf Example},
    pdfpagemode=FullScreen,
}

\definecolor{red(munsell)}{rgb}{0.95, 0.0, 0.24}
\definecolor{navyblue}{RGB}{0, 0, 128}
\definecolor{myblue}{RGB}{34,31,217}
\definecolor{mycyan}{gray}{.7}
\definecolor{Gray}{gray}{0.9}
\definecolor{usccardinal}{rgb}{0.6, 0.0, 0.0}
\definecolor{ultramarine}{RGB}{0,32,96}
\definecolor{amber}{rgb}{1.0, 0.49, 0.0}

\newtheorem{remark}{Remark}
\newtheorem{theorem}{Theorem}

\newtheorem{definition}{Definition}
\newtheorem{lemma}{Lemma}

\newtcolorbox{quotebox}{colback=gray!10,boxrule=0.4pt,colframe=black,fonttitle=\bfseries,top=1pt,bottom=1pt}

\usepackage[ruled, linesnumbered, vlined, commentsnumbered]{algorithm2e}
\usepackage{prettyref}
\usepackage{amsfonts,amssymb,amsmath,amsthm,amsopn,mathrsfs,mathtools,wasysym}	
\usepackage{bm}
\usepackage{multirow}

\def\our{\texttt{CBOB}}

\SetCommentSty{mycommfont}

\hyphenation{op-tical net-works semi-conduc-tor}

\newcommand{\pref}{\prettyref}

\newrefformat{fig}{Fig.~\ref{#1}}
\newrefformat{tab}{Table~\ref{#1}}
\newrefformat{sec}{Section~\ref{#1}}
\newrefformat{alg}{Algorithm~\ref{#1}}
\newrefformat{property}{Property~\ref{#1}}
\newrefformat{theorem}{Theorem~\ref{#1}}
\newrefformat{definition}{Definition~\ref{#1}}
\newrefformat{corollary}{Corollary~\ref{#1}}
\newrefformat{lemma}{Lemma~\ref{#1}}
\newrefformat{conj}{Conjecture~\ref{#1}}
\newrefformat{def}{Definition~\ref{#1}}
\newrefformat{eq}{equation~(\ref{#1})}
\newrefformat{app}{Appendix~\ref{#1}}

\usepackage{lscape}

\newenvironment{code-example}
{
\vspace{0.15cm}
\noindent\begin{minipage}{\linewidth}
\begin{center}
\arrayrulecolor{black}
\color{black}
\begin{tabular}{|p{0.95\linewidth}|}
\hline%
\rowcolor{pink!20}%
}
{
\\\hline
\end{tabular}
\end{center}
\end{minipage}
\vspace{-0.2cm}
}

\begin{document}

\title{\vspace{-1ex}\LARGE\textbf{Constrained Bayesian Optimization under Partial Observations: Balanced Improvements and Provable Convergence}~\footnote{This paper is accepted by AAAI 2024. Copyright © 2024, Association for the Advancement of Artificial Intelligence (www.aaai.org). All rights reserved.}}

\author[1]{\normalsize Shengbo Wang}
\author[2]{\normalsize Ke Li}
\affil[1]{\normalsize School of Computer Science and Engineering, University of Electronic Science and Technology of China, Chengdu 611731, China}
\affil[2]{\normalsize Department of Computer Science, University of Exeter, EX4 4RN, Exeter, UK}
\affil[\Faxmachine\ ]{\normalsize \texttt{shnbo.wang@foxmail.com}, \texttt{k.li@exeter.ac.uk}}

\date{}
\maketitle

\vspace{-3ex}
{\normalsize\textbf{Abstract: } }The partially observable constrained optimization problems (POCOPs) impede data-driven optimization techniques since an infeasible solution of POCOPs can provide little information about the objective as well as the constraints. We endeavor to design an efficient and provable method for expensive POCOPs under the framework of constrained Bayesian optimization. Our method consists of two key components. Firstly, we present an improved design of the acquisition functions that introduce balanced exploration during optimization. We rigorously study the convergence properties of this design to demonstrate its effectiveness. Secondly, we propose Gaussian processes embedding different likelihoods as the surrogate model for partially observable constraints. This model leads to a more accurate representation of the feasible regions compared to traditional classification-based models. Our proposed method is empirically studied on both synthetic and real-world problems. The results demonstrate the competitiveness of our method for solving POCOPs.

{\normalsize\textbf{Keywords: } }Bayesian optimization, constrained optimization, expected improvement, Gaussian processes, expectation propagation.

\section{Introduction}
\label{sec:introduction}

The black-box constrained optimization problem considered in this paper is formulated as:
\begin{equation}
        \mathrm{minimize} \;  f(\mathbf{x}) \quad 
        \mathrm{subject\ to} \quad \vec{g}(\mathbf{x})\leq 0, 
    \label{eq:cop}
\end{equation}
where $\mathbf{x}=(x_1,\cdots,x_n)^\top\in\Omega$ denotes the decision variable, $\Omega=[x_i^\mathrm{L},x_i^\mathrm{U}]_{i=1}^n\subset\mathbb{R}^n$ denotes the search space, $x_i^\mathrm{L}$ and $x_i^\mathrm{U}$ are the lower and upper bounds of $x_i$ respectively. The objective function $f(\mathbf{x})$ and $m$ constraint functions $\vec{g}(\mathbf{x})=(g_1(\mathbf{x}),\cdots,g_m(\mathbf{x}))^\top$ are: $i$) \textit{analytically unknown}, i.e., we do not have access to $f$ and $\vec{g}$ directly, but to $\mathbf{x}$ to be determined and their observations $f(\mathbf{x})$ and $\vec{g}(\mathbf{x})$ instead; $ii$) \textit{computationally expensive}; and $iii$) \textit{partially observable}, i.e., the values of $f$ and $\vec{g}$ are not observable/measurable when $\mathbf{x}$ is infeasible. We denote the unknown feasible space by $\chi=\left\{\mathbf{x}\in\Omega|\vec{g}(\mathbf{x})\leq 0\right\}$. Partially observable constrained optimization problems (POCOPs) are not uncommon in real-life applications. For example, a robot control task will be suspended when a collision or excessive instantaneous power consumption is detected, where the feedback is merely a failure message rather than any reward~\cite{MarcoBKHRT21}. An AutoML task will be terminated without outputting the performance of a hyperparameter configuration but an error log when there is a memory overflow or computation timeout~\cite{PerroneSJAS19}.

Bayesian optimization (BO) is recognized as an effective query-efficient framework for black-box optimization \cite{garnett_bayesoptbook_2023}. Although there have been dedicated efforts on constraint handling in the context of BO, a.k.a. constrained Bayesian optimization (CBO), most of them are however expected to work with complete observations. 
Considering missing observations, existing CBO methods may become inefficient due to the following two issues.
\begin{itemize}
    \item First, existing CBO methods risk overly exploiting known feasible regions in POCOPs. In particular, the update of an acquisition function such as expected improvement (EI)~\cite{JonesSW98} stagnates when objective values are unobservable outside $\chi$. This stagnation may cause cluttered observations in local feasible regions, resulting in an overconfidence effect in both surrogate modeling and candidate acquisition. Consequently, the efficiency of BO diminishes.

    \item Second, POCOPs generate mixed data from both feasible and infeasible solutions, which complicate sufficient exploitation using conventional surrogate models. When a probabilistic classifier like Gaussian process classifier (GPC) is employed to distinguish between feasible and infeasible solutions, the available observations lose their utility in refining the GPC model. Brockman et al. \cite{BrockmanCPSSTZ16} leveraged value observations by using regression models and artificially injecting values into infeasible solutions. However, this approach potentially introduces erroneous priors, thereby compromising optimization efficacy. 
\end{itemize}

Bearing these considerations in mind, this paper proposes a novel CBO framework for POCOPs. Our main contributions are outlined as follows.
\begin{itemize}
    \item To address the risk of overly exploiting evaluated local feasible regions, we propose a novel acquisition function framework. It enhances EI with a balanced constraint handling technique, encapsulated in a general exploration function to enable global search during optimization. We theoretically analyze the convergence properties of this design, and further develop an instance of the exploration function, effectively enhancing the efficiency of our method for solving POCOPs.

    \item To fully leverage the mixed-observation characteristic of POCOPs, we propose heterogeneous-likelihood Gaussian processes (HLGPs), providing a promising representation of unknown constraints compared to classifier-based models. Further, we employ expectation propagation to manage the non-Gaussian inference, yielding an efficient Gaussian approximation of HLGPs.

    \item To demonstrate the efficacy of our proposed method, we perform a series of experiments on diverse benchmark problems. These include synthetic problems and real-world applications in reinforcement learning-based control design and machine learning hyperparameter optimization. Experimental results show the competitiveness and better efficiency of our method compared to selected state-of-the-art CBO methods.
\end{itemize}

\section{Related Works}
\label{sec:related}

\subsection{Constrained Bayesian Optimization} 

Classic CBO methods usually reshape an unconstrained acquisition function by incorporating feasibility considerations. A prominent approach, the EI with constraints (EIC), first introduced by Schonlau et al. \cite{SchonlauWJ98}, has been extensively employed to locate feasible solutions with high probability~\cite{GardnerKZWC14,GelbartSA14}. To further enhance EIC's capability, Letham et al. \cite{LethamKOB17} leveraged a quasi-Monte Carlo approximation regarding observation noises. Furthermore, various strategies such as integration~\cite{GelbartSA14}, rollout~\cite{LamW17}, and a two-step lookahead algorithm~\cite{ZhangZF21}, have been proposed to achieve better optimization efficiency. These approaches increase the exploration of unknown regions, albeit at the cost of computational complexity, hindering their scalability for high-dimensional problems.

From the perspective of uncertainty reduction, predictive entropy search (PESC), which selects feasible candidate solutions directly from the search space, offers attractive heuristics to constrained optimization problems~\cite{LobatoGHAG15a}. However, the intractable nature of quadrature calculations during sampling in PESC has been a challenge. To address this, Takeno et al. \cite{TakenoTSK22} proposed the min-value entropy search, enabling the sampling process to operate more effectively within the objective space. This concept was subsequently extended to accommodate binary observations~\cite{PerroneSJAS19} and multi-objective scenarios~\cite{BelakariaDD19}. Besides, the fusion of EI and entropy search showed promise for enhancing exploration~\cite{LindbergL15}.

To harness the structure inherent in~\pref{eq:cop}, researchers such as Gramacy et al. \cite{GramacyGDLRWW16} and Picheny et al. \cite{PichenyGWD16} proposed the utilization of a Lagrangian method with slack variables, providing the capability to deal with equality constraints. Regarding problems marked by unknown constraints, Ariafar et al. \cite{AriafarCBD19} integrated BO with the alternating direction method of multipliers, thereby enabling the exploration of solutions even in the absence of feasible ones. Meanwhile, Eriksson and Poloczek \cite{ErikssonP21} proposed the use of Thompson sampling combined with a trust region approach towards enhancement of the scalability, meanwhile incorporating a thoughtful design aimed at maintaining computational efficiency.

\subsection{Surrogate Models for Constraints} 

The aforementioned CBO methods frequently employ Gaussian Process regression (GPR) and GPC to construct surrogate models for unknown constraints. Notably, classifiers such as GPC and support vector machine (SVM) have been found effective in sequential updates when the real value of an infringed constraint remains unobservable~\cite{LindbergL15,PerroneSJAS19,AriafarCBD19,BachocHP20,Candelieri21}. Yet, in partially observable scenarios, these classifiers exhibit limitations in utilizing available real-value observations, resulting in a dip in modeling performance. In response to this challenge, Marco et al. \cite{MarcoBKHRT21} enhanced the construction of GPR models by introducing a switched likelihood combined with mixed data observations. As an alternative solution, Pourmohamad and Lee \cite{PourmohamadL16} and Zhang et al. \cite{ZhangDL19} proposed multivariate GPs (MVGP) with joint distributions of hybrid input to handle mixed observations.

\subsection{Exploration for Unknown Feasible Regions}

In light of the black-box nature of the problem at hand, discerning the feasibility of a solution becomes a significant concern. In this context, Parr et al. \cite{ParrKFH12} interpreted the delicate balance between enhancing the probability of feasibility (POF) and optimizing the objective as a multi-objective optimization issue. Focusing on optimization, Picheny \cite{Picheny14} developed a step-wise uncertainty reduction method, which capitalizes on the volume of feasible regions beneath the most promising solution observed up to that point, despite the method's considerable computational complexity. Furthermore, the EIC was adapted in~\cite{LindbergL15,WangI18} to deepen the understanding of global feasible regions. Regarding exploration, level-set or contour estimation techniques have been adapted to locate unknown feasible regions~\cite{RanjanBM08,BectGLPV12,BachocCG21}. Yet, these methods can be excessively aggressive, thereby hindering optimization progress.

\section{Preliminaries of CBO}
\label{sec:preliminary}

Conventional BO starts from uniformly sampling a set of solutions according to a space-filling experimental design method. Thereafter, it sequentially updates its next sample until the given computational budget is exhausted. BO consists two main components: $i$) a \underline{surrogate model} for approximating the true expensive objective function; and $ii$) an \underline{infill criterion} (based on the optimization of an acquisition function) for deciding the next point of merit.

\subsection{Surrogate Model}
\label{sec:surrogate}

Given a set of training data $\mathcal{D}=\left\{\left(\mathbf{x}^i,f(\mathbf{x}^i)\right)\right\}_{i=1}^N$, we apply the GPR model to learn a Guassian process $\tilde{f}(\mathbf{x})$ with a prior mean function $m(\mathbf{x})$ and a noise-free likelihood~\cite{GPML}. For a candidate solution $\tilde{\mathbf{x}}$, the mean and variance of the target $f(\tilde{\mathbf{x}})$ can be predicted as:
\begin{equation}
    \begin{split}
        \mu_f(\tilde{\mathbf{x}})&= m(\tilde{\mathbf{x}}) + {\mathbf{k}^\ast}^\top K^{-1} \mathbf{f},\\
        \sigma^2_f\left(\tilde{\mathbf{x}}\right)&=k(\tilde{\mathbf{x}},\tilde{\mathbf{x}})-{\mathbf{k}^\ast}^\top K^{-1} {\mathbf{k}^\ast},
    \end{split}
    \label{eq:GP}
\end{equation}
where $\mathbf{k}^\ast$ is the covariance matrix between $X$ and $\tilde{\mathbf{x}}$, $K$ is the covariance matrix of $X$, $X=\left(\mathbf{x}^1,\ldots,\mathbf{x}^N\right)^\top$, and $\mathbf{f}=\left(f(\mathbf{x}^1) - m(\mathbf{x}^1),\ldots,f(\mathbf{x}^N) - m(\mathbf{x}^N)\right)^\top$. In this paper, we use the Mat\'ern ${5/2}$ as the kernel function combined with the constant mean function for all GP models by default. As for the $i$-th constraint in \eqref{eq:cop}, it will be modeled by an independent GP model $\tilde{g}_i(\tilde{\mathbf{x}})$ whose predictive mean and variance are denoted by $\mu_g^i(\tilde{\mathbf{x}})$ and $\sigma^i_g(\tilde{\mathbf{x}})$ respectively.

\subsection{Infill Criterion}
\label{sec:infill}

Instead of directly working on $\tilde{f}(\mathbf{x})$, the actual search process of BO is driven by an acquisition function that naturally strikes a balance between exploitation of the predicted optimum and exploration regarding uncertainty. This paper applies the widely used EI to serve this purpose:
\begin{equation}
    \mathrm{EI}(\tilde{\mathbf{x}}|\mathcal{D})=\sigma_f(\tilde{\mathbf{x}})\big(z\Phi_f\left(z\right)+\phi_f\left(z\right)\big),
    \label{eq:ei}
\end{equation}
where $z=\frac{f^\ast_\mathcal{D}-\mu_f(\tilde{\mathbf{x}})}{\sigma_f(\tilde{\mathbf{x}})}$, $f^\ast_\mathcal{D}=\underset{\left(\mathbf{x},f(\mathbf{x})\right)\in\mathcal{D}}{\min}f(\mathbf{x})$, $\Phi_f$ and $\phi_f$ denote the cumulative distribution function and probability density function according to $\tilde{f}$, respectively.

To tackle unknown constraints, EIC was proposed as a product of the EI with POF~\cite{GardnerKZWC14}:
\begin{equation}
    \mathrm{EIC}(\tilde{\mathbf{x}}|\mathcal{D})=\mathrm{EI}(\tilde{\mathbf{x}}|\mathcal{D}) \cdot \mathrm{POF}(\tilde{\mathbf{x}}),
    \label{eq:eic}
\end{equation}
with
\begin{equation}
    \mathrm{POF}(\tilde{\mathbf{x}})=\mathbb{P}\left[\vec{g}(\tilde{\mathbf{x}})\leq \lambda\right]=\prod_{i=1}^m \Phi_g^i(\lambda),
    \label{eq:pof}
\end{equation}
where $\Phi^i_g$ denotes the cumulative distribution function of the $i$-th constraint based on a GPR model $\tilde{g}_i(\tilde{\mathbf{x}}) \sim \mathcal{N}(\mu_g^i(\tilde{\mathbf{x}}), \sigma_g^{i\;2}(\tilde{\mathbf{x}}))$, $\lambda$ is the threshold of a feasible level and is set to a constant $0$ in this paper.

\section{Proposed Method}
\label{sec:method}

\begin{algorithm}[t!]
    \KwIn{Initial dataset
    $\mathcal{D}=\left\{\left(\mathbf{x}^i,f(\mathbf{x}^i), \vec{g}(\mathbf{x}^i)\right) \right\}_{i=1}^{N_0}$
    , budget $N$, and priors of GPs}
    \KwOut{The optimal feasible objective $f_\mathcal{D}^*$}
    
    \For{$k\leftarrow 1$ \KwTo $N$}{
        Build a GPR model for the black-box objective\;
        \For{$i\leftarrow 1$ \KwTo $m$}{
            $\triangleright$ Update $\mathbf{g}_i$ in $\mathcal{D}$ with modified observations $\tilde{\mathbf{\mu}}_g^i$ and $\tilde{\Sigma}_g^i$ using ~\pref{eq:posteriorEP}\;
            $\triangleright$ Build an HLGP model based on $\mathbf{g}_i$\;
        }

        $\triangleright$  ${\mathbf{x}^k} \leftarrow \arg \max_{\tilde{\mathbf{x}} \in \Omega} \mathrm{EICB}(\tilde{\mathbf{x}}\vert\mathcal{D})$\;
        \eIf{$\mathbf{x}^k$ is feasible}{
            $\mathcal{D} \leftarrow \mathcal{D} \cup \left\{\left(\mathbf{x}^k, f(\mathbf{x}^k), \vec{g}(\mathbf{x}^k)\right)\right\}$\;
        }{
            $\triangleright$ $g_i^k \leftarrow +1$ for the $i$-th violated constraints\;
            $\vec{g}(\mathbf{x}^k) = (g_1^k, \dots g_m^k)$\;
            $\triangleright$ $\mathcal{D} \leftarrow \mathcal{D} \cup \left\{\left(\mathbf{x}^k, \mathrm{Null}, \vec{g}(\mathbf{x}^k)\right)\right\}$\;
        }
        }
    \caption{Pseudo code of \our}
    \label{alg:cbob}
\end{algorithm}


This section delineates the implementation of our method, the CBO with balance (dubbed \our), for POCOPs. As shown in~\pref{alg:cbob}, \our\ adheres to the conventional CBO procedure while introducing two unique algorithmic components (highlighted by {$\triangleright$}). The first is a framework for designing an acquisition function, facilitating balanced exploration by effectively harnessing the surrogate models of constraints. The second is a bespoke GP model, specifically formulated to model constraints using partial observations.

\subsection{A Framework for Acquisition Function Design}
\label{sec:eicb}

Under the condition of partial observations, the EI function only updates upon evaluation of a feasible solution, while the POF predominantly targets known feasible regions. This results in an overemphasis on known feasible regions by the EIC, particularly when tackling POCOPs. Inspired by~\cite{Picheny14}, we posit that prioritizing search towards less explored regions can enhance exploratory capability, thus promoting more global search behaviors in a CBO method. This approach has been empirically substantiated in~\cite{LindbergL15,LamW17,ZhangZF21}. In this work, we propose a dynamic version of POF (DPOF) that incorporates an additional exploration capability, rather than prioritizing the most uncertain region indiscriminately, as follows:
\begin{equation}
     \mathrm{DPOF}(\tilde{\mathbf{x}})=\prod_{i=1}^m
     \mathrm{Proj}_{[0,1]}\big[(\rho^i(\tilde{\mathbf{x}})+1)\Phi_g^i(\lambda)\big],
    \label{eq:dpof}
\end{equation}
where $\mathrm{Proj}$ clips values outside $[0, 1]$ to the boundaries, and $\rho^i$ denotes a general exploration function defined below.
\begin{definition}[Exploration function]
    \label{def:explorationfunction}
    A smooth function $\rho^i (\mathbf{x}):\Omega\to[0,\bar\rho]$ is a valid exploration function if it is bounded by $\bar \rho > 0$ and $\rho^i(\tilde{\mathbf{x}})=0,\forall \sigma^i_g(\mathbf{x})=0$.
\end{definition}
With an exploration function $\rho^i$, DPOF assigns more weights to unknown regions than POF to facilitate a global search. However, to prevent excessive exploration and maintain a high probability of obtaining feasible solutions, we multiply $\Phi_g^i(\lambda)$ with $\rho^i$ in~\pref{eq:dpof}. Alternatively, this could be viewed as introducing a dynamic constraint threshold $\lambda(\tilde{\mathbf{x}})=\Phi_g^{i\;-1}(\mathrm{DPOF}^i(\tilde{\mathbf{x}}))$ that varies across different candidate solutions, where $\Phi_g^{i\;-1}$ denotes the inverse function of $\Phi_g^{i}$ and $\mathrm{DPOF}^i(\tilde{\mathbf{x}})$ is the $i$-th factor in~\pref{eq:dpof}. Note that when $\rho^i\equiv 0$, DPOF simplifies to the traditional POF. Building on this, we introduce a new acquisition function, termed as EI with constraint and balance (EICB), formulated as a product of EI and DPOF:
\begin{equation}
    \mathrm{EICB}(\tilde{\mathbf{x}}\vert\mathcal{D})=\mathrm{EI}(\tilde{\mathbf{x}}|\mathcal{D})\cdot \mathrm{DPOF}(\tilde{\mathbf{x}}).
    \label{eq:eicb}
\end{equation}

\begin{theorem}
    \label{theorem:1}
    Assume that the constraint values are fully observable. Assume also that the involved GPs are non-degenerate and satisfy the no-empty-ball property \cite{VazquezB10}. Let $\mathcal{D}$ be the collected observations with $\left(\mathbf{x}^1,f(\mathbf{x}^1)\right)$ fixed in $\chi$ while $\left\{\left(\mathbf{x}^i,f(\mathbf{x}^i)\right)\right\}_{i=2}^N$ are sequentially chosen by 
    \begin{equation}
        \mathbf{x}^{i} = \arg\max_{\mathbf{\tilde x} \in \Omega}\mathrm{EICB}(\tilde{\mathbf{x}}\vert\mathcal{D}).
        \label{eq:eicb_sequential}
    \end{equation}
    Then, as $N \to \infty$, almost surely:
    \begin{enumerate}
        \item the acquisition function $\sup_{\tilde{\mathbf{x}}\in \Omega} \mathrm{EICB}(\tilde{\mathbf{x}}\vert\mathcal{D}) \to 0$;
        \item the evaluated best objective $f_\mathcal{D}^\ast\to f_\chi^\ast$;
    \end{enumerate}
    where $f^\ast_\chi$ represents the global optimum of problem \eqref{eq:cop}.
\end{theorem}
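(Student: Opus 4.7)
The plan is to adapt the noise-free EI convergence program of \cite{VazquezB10} to our constrained acquisition $\mathrm{EICB}$. Two structural inequalities power the reduction: because $\mathrm{Proj}_{[0,1]}$ caps each factor of~\pref{eq:dpof} at $1$, we have $0\le \mathrm{DPOF}(\tilde{\mathbf{x}})\le 1$ and therefore $\mathrm{EICB}(\tilde{\mathbf{x}}|\mathcal{D})\le \mathrm{EI}(\tilde{\mathbf{x}}|\mathcal{D})$, while $\rho^i\ge 0$ ensures $\mathrm{EICB}(\tilde{\mathbf{x}}|\mathcal{D})\ge \mathrm{EI}(\tilde{\mathbf{x}}|\mathcal{D})\cdot \mathrm{POF}(\tilde{\mathbf{x}})$. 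The upper bound imports EI-type convergence tools, while the lower bound guarantees that $\mathrm{EICB}$ does not trivially annihilate genuinely feasible candidates.

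For claim $1$, I would argue by contradiction. Suppose that along some subsequence $N_k$, $\sup_\Omega \mathrm{EICB}(\cdot|\mathcal{D}_{N_k})\ge \varepsilon>0$, and let $\mathbf{x}^{N_k+1}$ be the corresponding maximizer. Compactness of $\Omega$ extracts a further subsequence with $\mathbf{x}^{N_k+1}\to \mathbf{x}^\ast$. Because $f$ and every $g_i$ are observed at each iteration (the full-observability hypothesis is used here) and the GPs are non-degenerate, the no-empty-ball property forces $\sigma_f(\mathbf{x}^\ast)\to 0$ and each $\sigma_g^i(\mathbf{x}^\ast)\to 0$. Definition~\ref{def:explorationfunction} then gives $\rho^i(\mathbf{x}^\ast)\to 0$, and $\mathrm{EI}(\mathbf{x}^\ast|\mathcal{D}_{N_k})\to 0$ follows from~\pref{eq:ei}. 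Joint continuity of $\mathrm{EICB}$ in $\tilde{\mathbf{x}}$ and in the posterior parameters transfers this vanishing along the sequence $\mathbf{x}^{N_k+1}\to \mathbf{x}^\ast$, contradicting $\mathrm{EICB}(\mathbf{x}^{N_k+1}|\mathcal{D}_{N_k})\ge \varepsilon$.

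For claim $2$, fix a global feasible minimizer $\mathbf{x}_\chi^\ast\in\chi$ (reducing to strictly interior points by continuity of $f$ if $\mathbf{x}_\chi^\ast\in\partial\chi$). Claim $1$ already yields $\mathrm{EICB}(\mathbf{x}_\chi^\ast|\mathcal{D}_N)\to 0$. I would first deduce from the no-empty-ball property that samples must accumulate in every neighborhood of $\mathbf{x}_\chi^\ast$: otherwise $\sigma_f(\mathbf{x}_\chi^\ast)$ and each $\sigma_g^i(\mathbf{x}_\chi^\ast)$ would stay bounded below, keeping $\mathrm{EI}$ and $\mathrm{DPOF}$ jointly bounded below, a contradiction. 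Accumulation of samples near $\mathbf{x}_\chi^\ast$ then drives $\mu_f(\mathbf{x}_\chi^\ast)\to f(\mathbf{x}_\chi^\ast)=f_\chi^\ast$. Using the elementary bound $\mathrm{EI}(\tilde{\mathbf{x}}|\mathcal{D})\ge \tfrac{1}{2}\max\{f_\mathcal{D}^\ast-\mu_f(\tilde{\mathbf{x}}),0\}$ (valid when $z>0$ because $\Phi(z)\ge 1/2$), the vanishing of $\mathrm{EI}(\mathbf{x}_\chi^\ast|\mathcal{D}_N)$ forces $f_\mathcal{D}^\ast-\mu_f(\mathbf{x}_\chi^\ast)\to 0$; combined with the convergence of $\mu_f(\mathbf{x}_\chi^\ast)$ and the trivial $f_\mathcal{D}^\ast\ge f_\chi^\ast$, this delivers $f_\mathcal{D}^\ast\to f_\chi^\ast$.

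The hardest step will be the final contradiction in claim $1$: the no-empty-ball property is inherently pointwise, so to conclude $\mathrm{EICB}(\mathbf{x}^{N_k+1}|\mathcal{D}_{N_k})\to 0$ along the converging argmax sequence one needs a locally uniform version, which requires joint control of the posterior variances and kernel regularity in a shrinking neighborhood of $\mathbf{x}^\ast$. A secondary subtlety, as noted above, is the treatment of boundary optima in claim $2$, where $\Phi_g^i(0)$ tends to $1/2$ rather than $1$; this is handled by an interior approximating sequence and continuity of $f$. The remaining pieces—continuity of $\mathrm{EI}$, $\mathrm{POF}$, and $\mathrm{DPOF}$ in the GP posteriors, and standard GPR posterior consistency on accumulation regions of the samples—are routine once these two hurdles are cleared.
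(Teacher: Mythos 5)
Your plan is a legitimately different route from the paper's. The paper recasts EICB as a stepwise uncertainty reduction strategy, shows the associated sequence $(H_N)$ is a supermartingale, and invokes the convergence machinery of Bect et al.\ and Bachoc et al.\ (their Proposition 2.9) to get $\sup_{\tilde{\mathbf{x}}}\mathrm{EICB}\to 0$; statement 2 then follows from $\mathrm{DPOF}\ge\mathrm{POF}$, non-vanishing of POF, and density of the design under NEB. You instead run the classical Vazquez--Bect direct argument: compactness of $\Omega$, extraction of a convergent argmax subsequence, pointwise NEB variance decay at the accumulation point, and a localized contradiction at the global minimizer for statement 2. Your two sandwich inequalities ($\mathrm{EI}\cdot\mathrm{POF}\le\mathrm{EICB}\le\mathrm{EI}$) are correct and mirror the paper's facts (ii)--(iii); your statement-2 argument is arguably cleaner than the paper's, since it is localized at $\mathbf{x}_\chi^\ast$ rather than requiring density of the design in all of $\chi$. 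What the paper's SUR route buys is that the supermartingale property handles the ``moving argmax'' issue wholesale, without ever needing a locally uniform variance estimate.

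That said, the step you flag as the ``hardest'' is a genuine gap, not a technicality: the contradiction in claim 1 needs $\mathrm{EICB}(\mathbf{x}^{N_k+1}\vert\mathcal{D}_{N_k})\to 0$ where both the evaluation point and the conditioning design change with $k$, and pointwise NEB at the limit $\mathbf{x}^\ast$ does not deliver this. In Vazquez--Bect this is precisely the technical core of the proof (handled via monotonicity of posterior variance in the design, continuity of $\sigma_f(\cdot\vert\mathcal{F}_{N_j})$ for fixed $j$, and a diagonal argument), so it cannot be deferred as ``routine.'' A second unaddressed point: $\sigma_f(\mathbf{x}^\ast)\to 0$ alone does not force $\mathrm{EI}(\mathbf{x}^\ast)\to 0$; from \pref{eq:ei}, if $\mu_f(\mathbf{x}^\ast)$ were to settle strictly below $f^\ast_\mathcal{D}$ the EI would converge to the positive quantity $f^\ast_\mathcal{D}-\mu_f(\mathbf{x}^\ast)$. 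You must additionally argue that $\lim f^\ast_\mathcal{D}\le f(\mathbf{x}^\ast)=\lim\mu_f(\mathbf{x}^\ast)$, which uses the fact that the sampled points $\mathbf{x}^{N_k+1}$ accumulating at $\mathbf{x}^\ast$ contribute their objective values to $\mathcal{D}$ --- fine under the theorem's full-observability hypothesis, but it needs to be said. With those two pieces supplied, your argument closes and constitutes a valid alternative proof.
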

The proof of~\pref{theorem:1} is sketched in \hyperref[sec:theoretical_eicb]{Section A} of the supplementary document. This theorem suggests that the incorporation of $\rho^i$ as designed in~\pref{eq:dpof} does not undermine the asymptotic convergence capability of EI-based acquisition functions, such as EICB. In the following subsection, we propose an instance of the exploration function under~\pref{def:explorationfunction} that outperforms the EIC in terms of efficiently conducting global optimization for POCOPs.

\subsubsection{An instance of the exploration function}
\label{sec:design_ecib}

In the context of EICB framework, exploration during optimization can be facilitated by an apt design of $\rho^i$. In this paper, we concentrate on identifying promising constraint boundaries, as opposed to aggressively targeting the most uncertain regions, a tactic often employed in level-set estimation and active learning~\cite{RanjanBM08,BichonESMM08,BectGLPV12,BachocCG21}. To this end, we first define a utility function representing the potential of being the boundary (POB) for the $i$-th constraint at $\tilde{\mathbf{x}}\in\Omega$ as follows:
\begin{equation}
    \mathrm{POB}^i(\tilde{\mathbf{x}})=\left\{ 
        \begin{array}{cc}
            1, & \tilde{g}_i(\tilde{\mathbf{x}}) \in \left[-\varepsilon(\tilde{\mathbf{x}}), \; \varepsilon(\tilde{\mathbf{x}}) \right], \\
            0, & \mathrm{otherwise},
        \end{array}
    \right.
\label{eq:pob}
\end{equation}
where $\varepsilon(\tilde{\mathbf{x}})=\beta{\sigma}^i_g(\tilde{\mathbf{x}})$ and $\beta>0$ represents a confidence level. Taking the expectation of~\pref{eq:pob} over the predicted distribution of $\tilde g_i(\tilde{\mathbf{x}})$ and defining $\bar g_i(\tilde{\mathbf{x}})=\mu_g^i(\tilde{\mathbf{x}})/\sigma_g^i(\tilde{\mathbf{x}})$, we obtain a valid exploration function as:
\begin{equation}
    \rho^i(\tilde{\mathbf{x}})= \Phi\left(\beta - \bar g_i (\tilde{\mathbf{x}})\right)-\Phi\left(- \beta - \bar g_i (\tilde{\mathbf{x}}) \right),
    \label{eq:exploration_function}
\end{equation}
where $\Phi$ is the cumulative distribution function of $\mathcal{N}(0, 1)$.

\begin{figure}[thb]
    \centering
    \scalebox{.8}{\includegraphics[width=\linewidth]{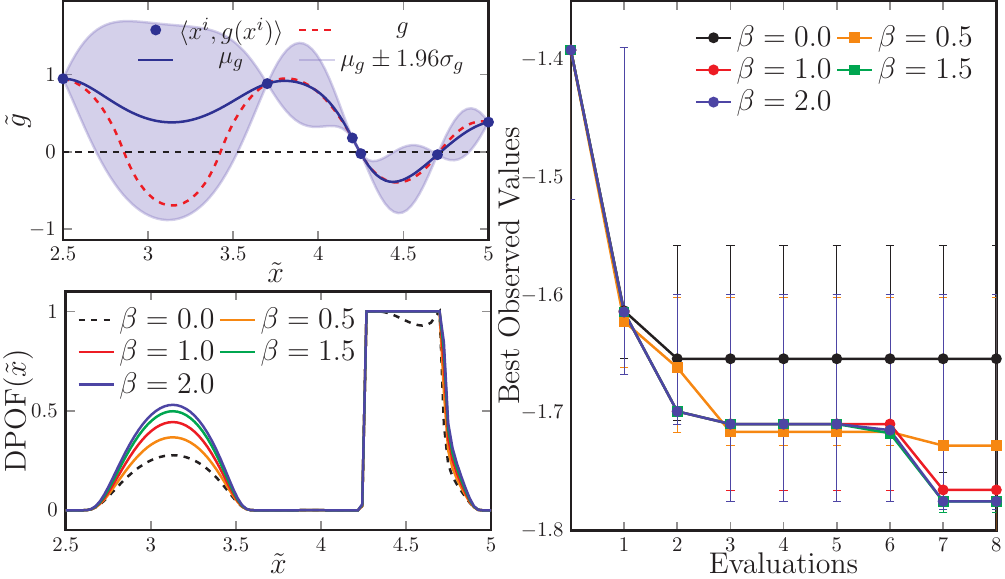}}
    \caption{Illustration of EICB with~\pref{eq:exploration_function} by a toy $1$-D example (please refer to \hyperref[sec:empirical_eicb]{Section B.1} of the supplementary document for more details). (Left up) The constraint surrogate model with $6$ observed data pairs where the red dotted line denotes the true function. (Left down) The DPOF with different $\beta$ on $[2.5, 5]$, where the dotted line with $\beta=0$ reduces to POF. (Right) Optimization trajectories on $[0, 10]$ with $5$ randomly repeated experiments.}
    \label{fig:illustrationEICB}
\end{figure}

The illustrative example in~\pref{fig:illustrationEICB} demonstrates how DPOF, given a surrogate model for a constraint, assigns more weight to the unknown feasible region ($[2.85,3.45]$ in this case) as $\beta$ increases. For a previously located feasible region such as $[4.3,4.7]$, DPOF provides equal weights (approximately $1$) to all candidates within this region when $\beta\ge 0.5$. In contrast, POF ($\beta=0$) assigns differentiated weights based on different $\Phi_g(0)$ values. By refining the boundary, DPOF ensures a more balanced weight distribution within the located feasible region, hence the nomenclature, EICB. We posit that this balanced approach enhances the decision-making capabilities of EI, as compared to the imbalanced weights scenario posed by POF. As a positive consequence, EICB encourages greater exploration towards unknown regions. As $\rho^i$ in~\pref{eq:exploration_function} is bound by $1$, DPOF can assign a maximum of $2\Phi_g(0)$ to any given candidate. Empirically, this subtle adjustment leads to enhanced optimization efficiency as evidenced in~\pref{fig:illustrationEICB}, thanks to the introduction of exploration. Besides, we provide a more aggressive design of $\rho^i$ that bolsters the reduction of uncertainty in global feasible regions. Due to the space restriction, this discussion is presented in \hyperref[sec:empirical_emub]{Section B.2} of the supplementary document. 

\subsection{Surrogate Models Under Partial Observations}
\label{sec:hlgp}

\begin{figure}[thb]
    \centering
    \scalebox{.8}{\includegraphics[width=\linewidth]{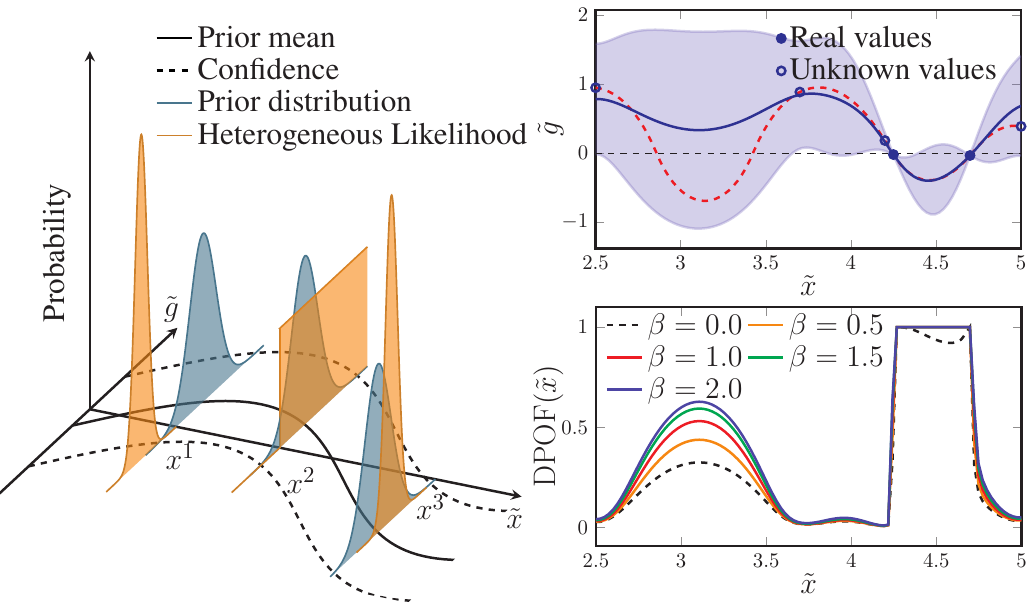}}
    \caption{Illustration of HLGPs. (Left) Three observations with heterogeneous likelihood distributions, including a truncated distribution on $x^2$, and Gaussian distributions on $x^1$ and $x^3$. (Right up) An EP-based HLGP model with partial observations. (Right down) The DPOF with different $\beta$.}
    \label{fig:HLGP_EP}
\end{figure}

When dealing with partially observable constraints, observations are composed of two distinct aspects: $i$) the actual values associated with feasible solutions, and $ii$) the truncated distribution of possible values for all solutions, such as $\mathbb{P}(\vec{g}>0) = 1$, as depicted in~\pref{fig:HLGP_EP}. The simultaneous consideration of these two types of observations can be achieved by attaching individual likelihood distributions to feasible/infeasible solutions, as is done in HLGP.

For the $i$-th constraint, the posterior of a latent function, $p(\tilde{\mathbf{g}}_i \vert X, \mathbf{g}_i)$, within an HLGP model is determined via the Bayes rule using the prior distribution $p(\tilde{\mathbf{g}}_i\vert X)=\mathcal{N}\left(\mathbf{0}, K\right)$ in~\pref{eq:GP}, along with the individual likelihood distributions. Specifically, the  likelihood can be expressed as:
\begin{equation}
    p({g}^k_i  \vert \tilde{g}_i^k)=
    \begin{dcases}
        \mathcal{N}\left(g_i(\mathbf{x}^k),\sigma^2\right), &\text{if } g_i(\mathbf{x}^k)\leq 0, \\
         \Phi(\alpha^{-1} g_i( \mathbf{x}^k)), &\text{if } g_i(\mathbf{x}^k)>0,
    \end{dcases}
    \label{eq:Helikelihood}
\end{equation}
where $k\in\{1,\ldots,N\}$, $\mathbf{g}_i=(g_i^1,\ldots,g_i^N)^\top$ represents $N$ observations of $g_i(X)$, $\tilde{\mathbf{g}}_i=(\tilde{g}_i^1,\ldots, \tilde{g}_i^N)^\top$ denotes $N$ latent functions, $\sigma\ge 0$ stands for the noise level, and $\alpha > 0$ is a scaling parameter. We set $\sigma=10^{-6}$ to indicate a noise-free environment and $\alpha=10^{-6}$ to approximate the truncating step function that implies $\mathbb{P}(g_i(\mathbf{x}^k)>0)=1$, $\forall g_i(\mathbf{x}^k)>0$~\cite{RiihimakiV10}.

\subsubsection{HLGP Inference via Expectation Propagation}
\label{sec:ep}

In this paper, we employ expectation propagation (EP)~\cite{Minka01}, a principled and highly efficient approach to handle non-Gaussian likelihoods. This provides Gaussian approximations to both the posterior and predicted distributions of HLGP. First, the posterior is formulated as:
\begin{equation}
    p(\tilde{\mathbf{g}}_i \vert X, \mathbf{g}_i) =
    \frac{1}{Z} p(\tilde{\mathbf{g}}_i\vert X) 
    \prod_{k=1}^N p( {g}^k_i\vert \tilde{g}_i^k),
    \label{eq:posteriorGP}
\end{equation}
where $Z$ is the normalization factor. For the $k$-th observation $g_i^k$, EP assigns it an un-normalized Gaussian distribution $t_i^k \triangleq \tilde{Z}_i^k\mathcal{N}\left( \tilde\mu_i^k, \tilde\sigma_i^{k\;2} \right)$ to locally approximate its exact likelihood. In this vein, the posterior is approximated by:
\begin{gather}
    p(\tilde{\mathbf{g}}_i \vert X, \mathbf{g}_i) \approx  \frac{1}{Z_{\mathrm{EP}}} p(\tilde{\mathbf{g}}_i\vert X) \prod_{k=1}^N t_i^k = \mathcal{N}(\bm{\mu}_g^i, \Sigma_g^i) \nonumber \\
    \text{with }~\bm{\mu}_g^i= \Sigma_g^i {\tilde{\Sigma}_{g}}^{i\;-1} \tilde{\bm{\mu}}_g^i ~\text{ and }~ \Sigma_g^i = (K + {\tilde{\Sigma}_{g}}^{i\;-1})^{-1},
    \label{eq:posteriorEP}
\end{gather}
where $\tilde{\bm{\mu}}_g^i = \left(\tilde\mu_i^1,\cdots,\tilde\mu_i^N\right)^\top$, $\tilde\Sigma_g^i$ denotes a diagonal matrix with the $k$-th element being $\tilde{\sigma}_i^{k\;2}$, and $Z_{\mathrm{EP}}$ is the marginal likelihood. The site parameters in $t_i^k$ of a Gaussian likelihood in~\pref{eq:Helikelihood} are valued by $\tilde{Z}_i^k = 1$, $\tilde{\mu}_i^k = g_i(\mathbf{x}^k)$, $\tilde{\sigma}_i^k = \sigma$. Differently, the site parameters of a non-Gaussian likelihood in~\pref{eq:Helikelihood} should be computed by the moment matching \cite{RiihimakiV10}. Detailed formulations of this part are delineated in \hyperref[sec:ep_app]{Section C} of the supplementary document. For a candidate solution $\tilde{\mathbf{x}}$, the mean and variance of the HLGP model $\tilde g_i(\tilde{\mathbf{x}})$ are predicted as:
\begin{equation}
    \begin{aligned}
        \mu_g^i(\tilde{\mathbf{x}})&= m(\tilde{\mathbf{x}}) + {\mathbf{k}^\ast}^\top(K+\tilde{\Sigma}_g^i)^{-1}\tilde{\bm{\mu}}_g^i, \\
        \sigma_g^{i\;2} \left(\tilde{\mathbf{x}}\right)&=k(\tilde{\mathbf{x}},\tilde{\mathbf{x}})-{\mathbf{k}^\ast}^\top (K+\tilde{\Sigma}_g^i)^{-1} {\mathbf{k}^\ast}.
    \end{aligned}
    \label{eq:predictionEP}
\end{equation}

In principle, the hyperparameters of EP-based GP models should be updated by maximizing the marginal likelihood $Z_{\mathrm{EP}}$. Since~\eqref{eq:predictionEP} resembles~\eqref{eq:GP}, from another perspective, EP algorithm serves as a data generator for HLGPs, i.e., assigning \emph{virtual observations} for infeasible solutions with an estimation of noise levels. Accordingly, the hyperparameters of an HLGP model can be optimized by maximizing the marginal likelihood of a vanilla GPR model using these injected observations rather than $Z_{\mathrm{EP}}$ for better computation efficiency, as noted in \cite{GPML}.

\subsubsection{Comparison with GPC}
\label{sec:gpc}

\begin{figure}[thb]
    \centering
    \scalebox{.8}{\includegraphics[width=\linewidth]{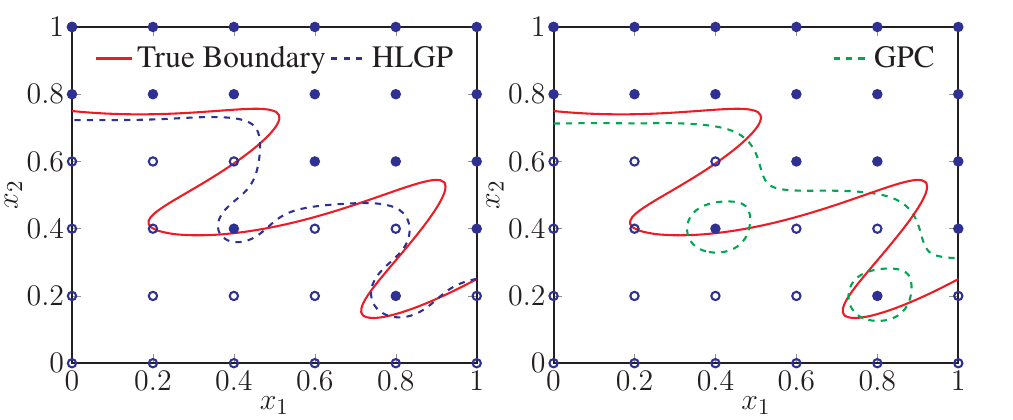}}
    \caption{Different curves of feasible boundary predicted by (Left) an HLGP model versus (Right) a GPC model.}
    \label{fig:HLGP_GPC}
\end{figure}

Finally, we present a brief comparison between the HLGP models in our proposed method and the GPC models frequently used in existing CBO algorithms for unobservable constraints. Notably, executing CBO routines is also feasible by only considering the truncated value distribution in~\pref{eq:Helikelihood}. We contend that HLGP, by leveraging available observations, constructs a more reliable surrogate model than GPC. As illustrated in~\pref{fig:HLGP_GPC}, $36$ equidistant solutions are evaluated, with constraint values being unobservable for half. The true boundary and the predictions made by HLGP and GPC are provided for performance assessment. The GPC model delineates the boundary by maximizing the distance between the nearest distinct solutions, mirroring the approach of an SVM~\cite{GPML}. This results in the prediction of three disconnected feasible regions, deviating from the true feasible region. In contrast, HLGP predicts a connected feasible region that covers the majority of the true one. As such, we anticipate that HLGP, with its superior modeling capability, can enhance optimization for POCOPs.

\subsection{Summary of the \our\ Algorithm}
\label{sec:summary}

To improve the optimization efficiency for POCOPs, \our\ fully exploits available observations by re-designing a more balanced acquisition function and constructing principled surrogate models from mixed observations. Our method combines a boundary-based exploration function, as in~\pref{eq:exploration_function}, and non-informative likelihoods as in~\pref{eq:Helikelihood}. Moreover, \our\ preserves ample flexibility in designing exploration functions and likelihoods, thus making it adaptable to individual optimization problems and suitable for further investigations.
\begin{itemize}
    \item Building on the concept of exploring potentially feasible regions as~\cite{LindbergL15,WangI18}, \our\ introduces extra exploration during optimization but with robust theoretical support underpinning this design. Moreover, our EICB maintains the computational efficiency of the original EIC, unlike the methods proposed in~\cite{GelbartSA14,LamW17,ZhangZF21}.

    \item Inspired by the level-set estimation methods~\cite{BachocCG21}, we propose an innovative exploration function as~\pref{eq:exploration_function} that emphasizes potential boundaries, integrating with the POF for efficient optimization. This design avoids aggressively evaluating unknown regions, which is a common tactic in active learning~\cite{RanjanBM08,BichonESMM08,BectGLPV12}.

    \item By employing HLGPs, we build a better surrogate model for each partially observable constraint function, outperforming GPC-based methods~\cite{LindbergL15,PerroneSJAS19,AriafarCBD19,BachocHP20,Candelieri21}. With the aid of EP and its generated virtual observations, we manage to construct HLGP models with commendable computational efficiency, resulting in an improvement over other models with mixed observations~\cite{PourmohamadL16,ZhangDL19}.
\end{itemize}

\section{Experiment Setup}
\label{sec:setting}

In this section, we present the experimental settings used in our empirical study.

\subsection{Benchmark Suite}
\label{sec:problems}


Our experiments consider various optimization tasks, including synthetic problems, engineering design cases, hyperparameter optimization (HPO) problems based on scikit-learn \cite{PedregosaVGMTGBPWDVPCBPD11}, and reinforcement learning tasks based on Open AI Gym \cite{BrockmanCPSSTZ16}, to constitute our benchmark suite. In addition, we consider the following two scenarios of POCOPs.
\begin{itemize}
    \item\underline{S1:} The first scenario is that only $f$ is partially observable. Specifically, problems include $10$D Keane's bump function (KBF)~\cite{Keane94}, $4$D welded beam design (WBD)~\cite{Deb00}, $7$D HPO of XGBoost on the California housing dataset with a model size constraint (XGB-H), and $12$D Lunar Landing with an energy constraint (Lunar)~\cite{ErikssonPGTP19}.
    \item\underline{S2:} The second scenario considers both $f$ and $\vec{g}$ are partially observable. Specifically, problems include $10$D Ackley function with one constraint (Ackley)~\cite{MarcoBKHRT21}, $4$D pressure vessel design (PVD)~\cite{CoelloM02}, $8$D HPO of MLP on the digits dataset with a model size constraint (MLP-D), and $16$D Swimmer with an energy constraint (Swimmer)~\cite{WangFT20}.
\end{itemize}

\subsection{Peer Algorithms}
\label{sec:peers}

We consider three state-of-the-art CBO methods, including \texttt{EIC}~\cite{GardnerKZWC14} in the EI-based family, min-value entropy search with constraints (\texttt{MESC})~\cite{TakenoTSK22} in the information-theoretic family, and Thompson sampling with constraints (\texttt{TSC})~\cite{ErikssonP21} in the stochastic sampling family. For S2, note that both \texttt{EIC} and \texttt{MESC} can handle binary constraint feedback \cite{BachocHP20,PerroneSJAS19}. For S1, all algorithms use GPR to build the surrogate models. For S2, we choose either HLGP or GPC for modeling constraints. In particular, we use a dedicated subscript to represent the corresponding surrogate model, e.g., \texttt{EIC}$_c$ and \texttt{EIC}$_{h}$ denote EIC with GPC and HLGP, respectively.

\subsection{General Settings}
\label{sec:parameters}

All algorithms are implemented according to their open-source code \cite{ErikssonP21,TakenoTSK22}. 
In \texttt{MESC}, the optimal solutions are sampled $20$ times. Both \texttt{MESC} and \texttt{TSC} sample with a grid size of $1\,000$. For \texttt{CBOB} with ~\pref{eq:exploration_function}, we fix $\beta=1.96$ to obtain a $95\%$ confidence level. As~\pref{eq:exploration_function} is a conservative design for exploration, we omit the study on more conservative behaviors with smaller $\beta$. 
Each experiment is independently repeated $20$ times with shared random seeds. For all tasks, the Sobol sequence is used to generate $11\times n$ initial samples, then $100$ function evaluations (FEs) are performed in each experiment. Detailed settings of all algorithms and benchmark problems are presented in \hyperref[sec:experiment]{Section D} of the supplemental document. The source code of our project is available\footnote{\href{https://github.com/COLA-Laboratory/CBOB}{https://github.com/COLA-Laboratory/CBOB}}.

\section{Experiment Results}

The optimization trajectories of all experiments are given in Figures~\ref{fig:exp_s1} and~\ref{fig:exp_s2}. In addition, the median best-evaluated values (BOVs) and average ratios of feasible evaluations (ROFs) of different algorithms are presented in Tables~\ref{tab:s1} and~\ref{tab:s2}. We empirically study the efficacy of \texttt{CBOB} from three aspects: $i)$ the \emph{improvements} on \texttt{CBOB} for EI-based CBO methods and GPC-based models; $ii)$ the \emph{competitiveness} of \texttt{CBOB} with other peer algorithms; and $iii)$ the \emph{relationship} between efficiency and exploration ability of \texttt{CBOB}.

\begin{figure*}[thb]
    \includegraphics[width=\linewidth]{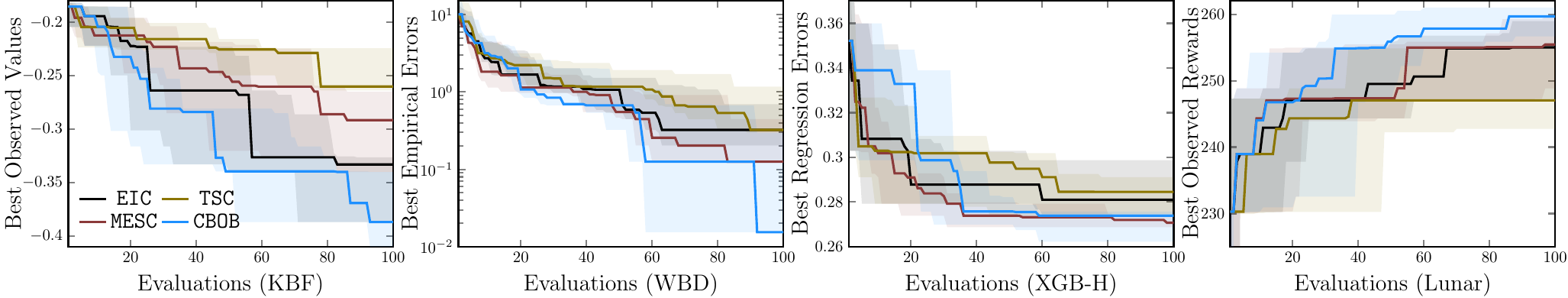}
    \caption{Optimization trajectories of different tasks in S1 ($f$ is partially observable).}
    \label{fig:exp_s1}
\end{figure*}

\begin{figure*}[thb]
    \includegraphics[width=\linewidth]{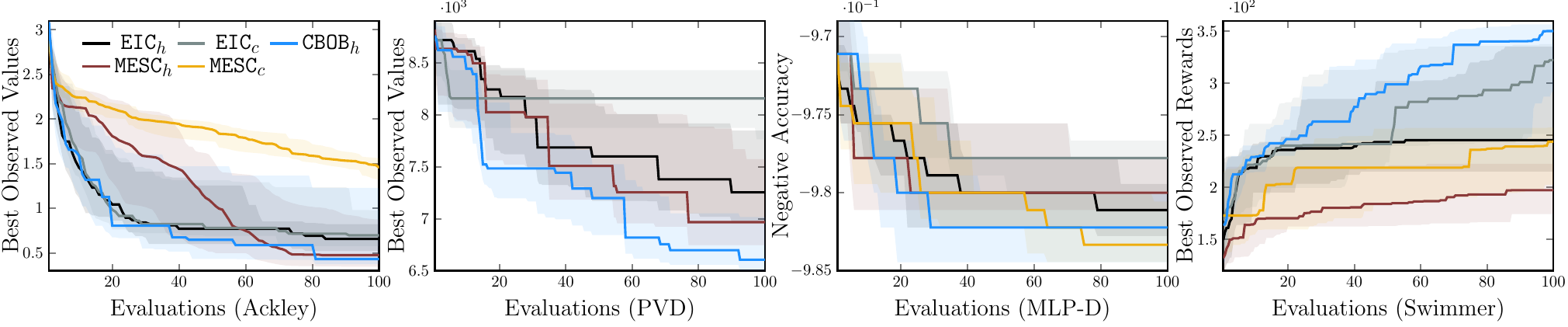}
    \caption{Optimization trajectories of different tasks in S2 (both $f$ and $\vec{g}$ are partially observable).}
    \label{fig:exp_s2}
\end{figure*}

\begin{table}[tb]
    \centering
    \setlength{\tabcolsep}{9mm}{
    \begin{tabular}{|c|c|cccc|}
    \hline
     \multicolumn{2}{|c|}{Algorithm} & KBF & WBD & XGB-H & Lunar \\
     \hline
     \multirow{2}*{\texttt{EIC}} & BOV & $-0.33$ & $2.47$ & $0.281$ & $255$ \\
     ~ & ROF & $99.3\%$ & $75.3\%$ & \underline{$22.4\%$} & $82.3\%$\\
     \hline
     \multirow{2}*{\texttt{MESC}} & BOV  & $-0.29$ & $2.28$ & $\mathbf{0.271}$ & $255$ \\
     ~ & ROF & $99.5\%$ & \underline{$18.3\%$} & $58.1\%$ & $84.5\%$ \\
     \hline
     \multirow{2}*{\texttt{TSC}} & BOV  & $-0.26$ & $2.47$ & $0.283$ & $247$\\
     ~ & ROF & $99.8\%$ & $77.9\%$ & $51.2\%$ & $87.7\%$ \\
     \hline
     \multirow{2}*{\our} & BOV  & $\mathbf{-0.39}$ & $\mathbf{2.16}$ & $0.274$ & $\mathbf{260}$\\
     ~ & ROF & \underline{$99.2\%$} & $67.4\%$ & $25.8\%$ & \underline{$70.0\%$}\\
     \hline
    \end{tabular}}
    \caption{The BOV and ROF of different algorithms in S1.}
    \label{tab:s1}
\end{table}

\begin{table}[tb]
    \centering
    \setlength{\tabcolsep}{8mm}{
    \begin{tabular}{|c|c|cccc|}
    \hline
     \multicolumn{2}{|c|}{Algorithm} & Ackley & PVD & MLP-D & Swimmer \\
     \hline
     \multirow{2}*{\texttt{EIC}$_c$} & BOV & $0.70$ & $8155$ & $0.978$ & $322$ \\
     ~ & ROF & $77.9\%$ & $5.12\%$ & $84.5\%$ & $82.3\%$\\
     \hline
     \multirow{2}*{\texttt{MESC}$_c$} & BOV & $1.44$ & N/A & $\mathbf{0.983}$ & $244$ \\
     ~ & ROF & $78.2\%$ & N/A & $83.1\%$ & $83.5\%$ \\
     \hline
     \multirow{2}*{\texttt{EIC}$_h$} & BOV & $0.66$ & $7598$ & $0.981$ & $245$ \\
     ~ & ROF & $38.4\%$ & \underline{$3.13\%$} & \underline{$64.8\%$} & \underline{$67.2\%$}\\
     \hline
     \multirow{2}*{\texttt{MESC}$_h$} & BOV  & $0.48$ & $7507$ & $0.98$ & $197$ \\
     ~ & ROF & $54.1\%$ & $3.78\%$ & $50.1\%$ & $79.8\%$ \\
     \hline
     \multirow{2}*{\our} & BOV  & $\mathbf{0.43}$ & $\mathbf{7198}$ & $0.982$ & $\mathbf{350}$\\
     ~ & ROF & \underline{$38.0\%$} & $3.32\%$ & $65.5\%$ &  $77.1\%$\\
     \hline
    \end{tabular}}
    \caption{The BOV and ROF of different algorithms in S2.}
    \label{tab:s2}
\end{table}

\begin{figure}[thb]
    \centering
    \scalebox{.8}{\includegraphics[width=\linewidth]{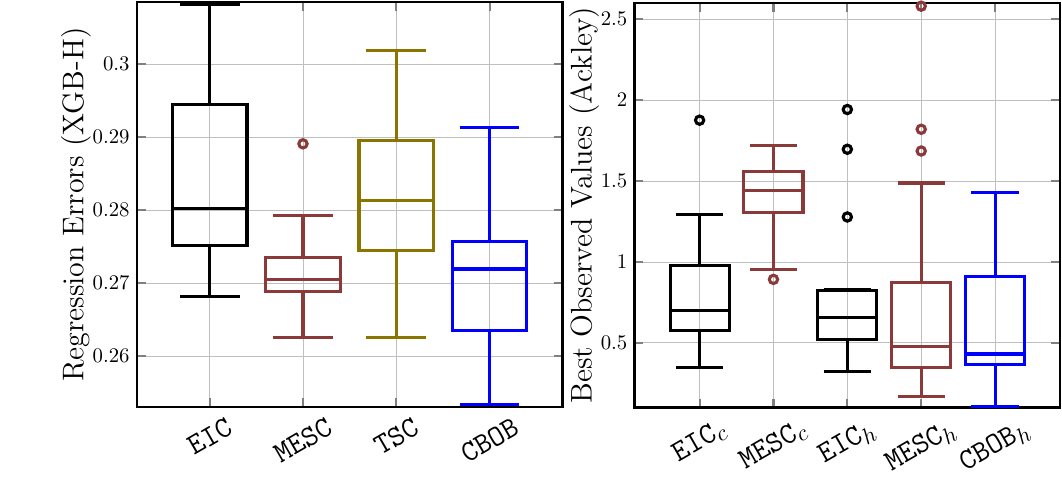}}
    \caption{Box plot of the final best observed values of different algorithms on Ackley and XGB-H.}
    \label{fig:exp_box}
\end{figure}

\subsection{Improvements} 
Although EIC can be more efficient within $10$ to $20$ FEs, such as in WBD, XGB-H, Ackley, and MLP-D, EICB outperforms EIC in all experiments after $100$ FEs, which demonstrates the design of DPOF. As the natural expense of exploration, the value deviation of EICB may be larger during the search. As given in ~\pref{fig:exp_box}, despite \texttt{EIC}$_c$ and \texttt{EIC}$_h$ have smaller deviations of the best evaluated values, EICB obtains a more promising result in a statistical sense. In addition, compared to GPC, HLGP does not always improve \texttt{EIC} and \texttt{MESC}, whereas benefiting \our\ well.

\subsection{Competitiveness}
The \our\ shows a strong competitiveness in all experiments against other CBO methods. In addition to the problems that EI-based methods perform well, such as KBF and Swimmer, \our\ remains competitive in problems that EI-based methods struggle, such as XGB-H, PVD and MLP-D. In comparison, \texttt{MESC}, as a promising CBO method in most problems, is inefficient in problems such as KBF and Swimmer. The \texttt{TSC} that showed high efficiency in fully observable environments with trust regions \cite{ErikssonP21}, struggles in solving POCOPs. Moreover, \our\ and other EI-based CBO methods show less efficiency in HPO problems, which agrees with the empirical results in \cite{WatanabeH23}.

\subsection{Relationship between efficiency and exploration}
In Tables~\ref{tab:s1} and~\ref{tab:s2}, while \our\ obtains better feasible solutions, its ROFs are relatively low, i.e., more evaluated solutions of \our\ are infeasible. On the one hand, this agrees with the ideas of \cite{ZhangZF21} that exploration towards infeasible regions facilitates optimization efficiency of CBO methods. It also explains the larger deviation of \our\ in~\pref{fig:exp_box} that infeasible evaluations return little information in POCOPs. Besides, we find this exploration effective as the number of outliers of \our\ reduces, such as MESC$_h$ and \our\ in Ackley of ~\pref{fig:exp_box}. On the other hand, since EI has already considered exploration, \texttt{EIC} can also have fewer ROFs during the search. Differently, in order not to break the balance of EI between exploration and exploitation, \our\ assigns more balanced weights for constraint handling. We highlight that this idea can also be integrated with other acquisition functions, such as the probability of improvement and parzen estimator \cite{garnett_bayesoptbook_2023}.

\section{Concluding Remarks}
\label{sec:conclusions}

This paper designs \texttt{CBOB} that fully exploits the available observations for better exploration and surrogate modeling, by both theoretical and empirical analysis, demonstrating that \our\ has the potential to be a promising CBO method for POCOPs. Due to the space restriction, we provide some further discussions on the limitations and potential impact of this work in \hyperref[sec:further]{Section E} of the supplementary document. Further investigations include the in-depth theoretical study of \our\ and the design of the exploration functions that are suitable for individual problems. We will also endeavor to propose a risk-aware improvement of EICB regarding robustness and the reduction of infeasible evaluations, contributing to more practical optimization scenarios.

\section*{Acknowledgment}
This work was supported in part by the UKRI Future Leaders Fellowship under Grant MR/S017062/1 and MR/X011135/1; in part by NSFC under Grant 62376056 and 62076056; in part by the Royal Society under Grant IES/R2/212077; in part by the EPSRC under Grant 2404317; in part by the Kan Tong Po Fellowship (KTP\textbackslash R1\textbackslash 231017); and in part by the Amazon Research Award and Alan Turing Fellowship.

\bibliographystyle{IEEEtran}
\bibliography{IEEEabrv,your_bib}

\newpage
\newcommand{\figuretag}[1]{%
  \addtocounter{figure}{-1}%
  \renewcommand{\thefigure}{#1}%
}

\fancyhead{}

\section*{Supplementary document: Technical Appendices}

This document includes the detailed derivation, algorithm design, and experiment settings in the paper. Specifically,  we present the theoretical analysis of the EICB framework in \hyperref[sec:theoretical_eicb]{Section A}. The empirical study of EICB, including another design of the exploration function and comparative experiments of EI-based CBO methods, is given in \hyperref[sec:empirical]{Section B}. In \hyperref[sec:ep_app]{Section C}, we show a detailed derivation of the EP approximation for HLGP models. The experiment settings are presented in \hyperref[sec:experiment]{Section D}. Finally in \hyperref[sec:further]{Section E}, we discuss the limitations and potential impact of our work.

\section*{Section A. Theoretical analysis of EICB}
\label{sec:theoretical_eicb}

We justify the design of EICB by studying its convergence property under the framework of the \emph{consistency} research for EI~\cite{BectBG19} and vanilla EIC~\cite{BachocHP20}. In the context of sequential design, let $\mathcal{F}_N$ denote the $\sigma$-algebra generated by the random variables $\mathbf{x}^1, Z^1$, $\dots$, $\mathbf{x}^N, Z^N$ where $Z^i$ is the union observation of $\left( f(\mathbf{x}^i), \vec g(\mathbf{x}^i)\right)$. Additionally, let $\mathcal{F}_{N, \tilde{\mathbf{x}}}$ be the $\sigma$-algebra generated by $\mathbf{x}^1, Z^1$, $\dots$, $\mathbf{x}^N, Z^N$, $\tilde{\mathbf{x}}, \tilde Z$ with $\tilde Z$ the union observation of $\left( f(\tilde{\mathbf{x}}), \vec g(\tilde{\mathbf{x}})\right)$.
Then, a sequential design strategy derived from the EICB acquisition framework takes the following form:
\begin{equation}
    \mathbf{x}_{N+1} = \arg\max_{\tilde{\mathbf{x}} \in \Omega} \mathbb{E}_N \left[ M_N^0 - M^\rho_{N, \tilde{\mathbf{x}}}  \right],
    \tag{A.1} \label{eq:eicb_sq_design}
\end{equation}
in which 
\begin{gather}
    M^\rho_{N, \tilde{\mathbf{x}}} = \min_{ 
        \mathbf{x} \in \Omega, \,
        \mathbb{P}(\vec{g}(\mathbf{x})<\lambda(\mathbf{x}) \vert\mathcal{F}_{N, \tilde{\mathbf{x}}}) = 1, \,
        \sigma_f(\mathbf{x} \vert \mathcal{F}_{N, \tilde{\mathbf{x}}})=0 
    } \tag{A.2} 
    \tilde{f} (\mathbf{x}), \\ 
    M_N^0 = \min_{\mathbf{x} \in \Omega, \,
        \mathbb{P}(\tilde{g}(\mathbf{x})<0\vert \mathcal{F}) = 1, \,
       \sigma_f(\mathbf{x} \vert \mathcal{F}_N)=0 } \tilde{f}(\mathbf{x}). \tag{A.3} 
\end{gather}

\begin{definition}[Non-degenerate GPs \cite{GinsbourgerRD16}]
    A non-degenerate GP model $\tilde{f}(\mathbf{x})$ predicts $\sigma_f(\mathbf{x}) = 0$ if and only if $\left( \mathbf{x},  f(\mathbf{x})\right) \in \mathcal{D}$.
\end{definition}

\begin{definition}[No-empty-ball~\cite{VazquezB10}]
    \label{def:neb}
    Let $(\mathbf{x}^N)_{N\ge 1}$ be any sequence in $\Omega$, and $z$ be any solution in $\Omega$. A GP model has the no-empty-ball (NEB) property, 
    if its prediction error at $z$ goes to zero, $\forall \epsilon>0$, there always exists $N\ge 1$ such that $\vert z - \mathbf{x}^N \vert < \epsilon$.
\end{definition}


\begin{proof}[Proof of Theorem 1]
    When GPs are non-degenerate, \pref{eq:eicb_sq_design} becomes equivalent to \pref{eq:eicb_sequential}. Specifically, $M_N^0$ will be equal to $f^*_\mathcal{D}$ in \pref{eq:ei}, while $M^\rho_{N,\tilde{\mathbf{x}}}$ will be the predicted feasible objective value under dynamic constraint threshold implicitly defined by \pref{eq:dpof}. Next, we present the criteria for \textit{asymptotic convergence} of EICB.
    
    The proof of the first statement, i.e., the convergence of EICB, consists of three steps.
    
    \textbf{Step 1. EICB serves as a stepwise uncertainty reduction (SUR) sequential design.} For $N\ge 2$, a minimization version of \pref{eq:eicb_sq_design} can be given as
    \begin{equation}
        \mathbf{x}_{N+1} = \arg\min_{\tilde{\mathbf{x}} \in \Omega} \mathbb{E}_N \left[H_{N, \tilde{\mathbf{x}}}\right], \tag{A.4}
    \end{equation}
    in which
    \begin{align}
        H_{N, \tilde{\mathbf{x}}} = &   M^\rho_{N, \tilde{\mathbf{x}}} - M_N^0 \nonumber \\
        = & \mathbb{E}_{N, \tilde{\mathbf{x}}}\left[  M^\rho_{N, \tilde{\mathbf{x}}} - \min_{\mathbf{x} \in \Omega, \; \vec{g}(\mathbf{x}) < 0} \tilde{f} (\mathbf{x})  \right]. \tag{A.5}
    \end{align}
    The above equation holds since: $i)$ $M^0_N$ is independent from $\tilde{\mathbf{x}}$, and $ii)$ $\mathbb{E}_{N, \tilde{\mathbf{x}}}\left[ M^\rho_{N, \tilde{\mathbf{x}}} \right] = M^\rho_{N, \tilde{\mathbf{x}}}$ for minimum operation. Therefore EICB strategy can be transformed into an equivalent SUR sequential design strategy for $H_{N, \tilde{\mathbf{x}}}$. Likewise, we define
    \begin{equation}
        H_{N} = \mathbb{E}_{N}\left[  M^\rho_{N} - \min_{\mathbf{x} \in \Omega, \; \vec{g}(\mathbf{x}) < 0} \tilde{f} (\mathbf{x})  \right]. \tag{A.6}
    \end{equation}
    
    \textbf{Step 2. $(H_N)$ is a supermartingale.} 
    For well-structured GP models and well-defined smooth functions $\rho^i$, we have: $i)$ $\sigma_f(\mathbf{x} \vert \mathcal{F}_{N+1}) \leq \sigma_f(\mathbf{x} \vert \mathcal{F}_{N})$ (based on the definition of GP predicted variance), and $ii)$ $ \mathbb{P}(\vec{g}(\mathbf{x})<\lambda(\mathbf{x}) \vert\mathcal{F}_{N}) = 1$ is sufficient for $\mathbb{P}(\vec{g}(\mathbf{x})<\lambda(\mathbf{x}) \vert\mathcal{F}_{N+1}) = 1$ (based on the non-increasing property of $\rho^i$ on an evaluated solution ${\mathbf{x}}^N$). Therefore, the following inequality holds:
    \begin{equation}
        H_{N} - \mathbb{E}_N [H_{N+1}] = \mathbb{E}_{N}\left[  M^\rho_{N} - M^\rho_{N+1}\right] \ge 0,
        \tag{A.7}
        \label{eq:proof_martine}
    \end{equation}
    which implies that $(H_N)_{N\in \mathbb{N}}$ is a  supermartingale. According to \cite{BachocHP20}, there is $H_{N} - \mathbb{E}_N [H_{N+1}] \to 0$ as $N\to \infty$, and also 
    \begin{equation}
        \sup_{\tilde{\mathbf{x}} \in \Omega} \Big[H_N - \mathbb{E}_N[H_{N, \tilde{\mathbf{x}}}] \Big] \to 0.
        \tag{A.8}
        \label{eq:proof_martingale}
    \end{equation}
    
    \textbf{Step 3. $\mathrm{EICB}(\tilde{\mathbf{x}}\vert\mathcal{D})$ converges to $0$ almost surely.} Note that for $N$ observed solutions, $M^\rho_{N} = M^0_{N}$ as $\lambda =0$. According to \pref{eq:proof_martine}, we also have
    \begin{align}
        \sup_{\tilde{\mathbf{x}} \in \Omega} \mathbb{E}_{N}\left[  M^\rho_{N} - M^\rho_{N, \tilde{\mathbf{x}}}\right]  \ge    \sup_{\tilde{\mathbf{x}} \in \Omega} \mathbb{E}_{N}\left[  M^0_{N} - M^\rho_{N,  \tilde{\mathbf{x}}}\right] \nonumber \\
        \ge \sup_{\tilde{\mathbf{x}} \in \Omega} \mathrm{EI}(\tilde{\mathbf{x}}|\mathcal{D})\cdot \mathrm{DPOF}(\tilde{\mathbf{x}}).
        \tag{A.9}
        \label{eq:proof_eicb_to0}
    \end{align}
    Therefore, with the same proof as that of Proposition 2.9 \cite{BectBG19}, for $N \to \infty$, \eqref{eq:proof_martingale} and \eqref{eq:proof_eicb_to0} yield $\sup_{\tilde{\mathbf{x}} \in \Omega} \mathrm{EICB}(\tilde{\mathbf{x}}|\mathcal{D}) \to 0$. From \pref{eq:ei}, it can be further obtained that $\mathrm{DPOF}(\tilde{\mathbf{x}}) \sigma_f(\tilde{\mathbf{x}}) \to 0$. This completes the proof for the first statement.

    The second statement is proven by virtue of the global search ability of EI and corresponding dense evaluated solutions in $\chi$ if the NEB property is met. We complete the proof by providing the following facts: $i)$ $\mathrm{DPOF}(\tilde{\mathbf{x}}) \sigma_f(\tilde{\mathbf{x}}) \to 0$ holds from the first statement; $ii)$ $\mathrm{DPOF}(\tilde{\mathbf{x}}) \ge \mathrm{POF}(\tilde{\mathbf{x}})$ since $\rho^i\ge 0$; $iii)$ the variance of $\mathrm{POF}(\tilde{\mathbf{x}})$ will not go to zero according to its definition \cite{BachocHP20}; and $iv)$ $\sigma_f(z \vert \mathcal{F}_N) \to 0$ for all sequences accordingly. Based on these facts, the sequence is almost surely dense in $\chi$ if the GP models have the NEB property according to \pref{def:neb}. As a result, $f^*_\mathcal{D}$ from any sequence converges to $f^*_\chi$ almost surely when $N \to \infty$.
\end{proof}

\fancyhead[L]{
    \emph{Supplementary document: Technical Appendices}
}

\section*{Section B. Empirical analysis of EICB}
\label{sec:empirical}
\subsection*{B.1 The illustrative example in  \pref{fig:illustrationEICB}}
\label{sec:empirical_eicb}
The synthetic problem considered in  \pref{fig:illustrationEICB} is given as:
\begin{equation}
    \begin{aligned}
        \mathrm{minimize} \quad & \cos(5x) - \sin(x)\sin(2x), \\
        \mathrm{subject\ to} \quad & \cos(5x) - \sin(x)\sin(2x)\leq 0,
    \end{aligned} \tag{B.1}\label{eq:ill_example}
\end{equation}
where the objective function and constraint function share the same analytical format. The total search space is $\Omega=[0, 10]$, while in the left of  \pref{fig:illustrationEICB} only a sub-region ($[2.5, 5.0]$) is plotted for brevity. The initial evaluated points in the left of \pref{fig:illustrationEICB} include: $2$ feasible points at $[4.25, 4.7]$, and $4$ infeasible points $[2.5, 3.7, 4.2, 5.0]$. We further assume that both objective and constraint are fully observable to efficiently reveal the difference between EIC and EICB, while in all other examples and experiments, we consider POCOPs. The GPR models with Mat\'ern $5/2$ kernel and constant mean function are used to build the surrogate models.

In the right of  \pref{fig:illustrationEICB}, we conduct $5$ repeated experiments in which $10$ solutions are uniformly sampled for initialization. We use consistent random seeds across different acquisition functions to obtain the same initialization. The total budget for optimization is $8$. The median, $1/4$, and $3/4$ quantiles of the best observed objective values are plotted.

\subsection*{B.2 Another instance of the exploration function}
\label{sec:empirical_emub}
The presented design of an exploration function in  \pref{eq:pob} and  \pref{eq:exploration_function} is relatively conservative compared to the level-set estimation and active learning techniques since it only considers boundaries with high probability. We are going to introduce another exploration function that aims for uncertainty reduction of the global feasible regions. Inspired by \cite{BichonESMM08}, this can be achieved by modifying the utility function in  \pref{eq:pob} into
\begin{align}
    \mathrm{MUB}^i(\tilde{\mathbf{x}})= & \max\{\varepsilon(\tilde{\mathbf{x}}) - \vert \tilde{g}_i(\tilde{\mathbf{x}})  \vert, 0 \} \nonumber\\
    = & \left\{ 
        \begin{array}{cc}
            \varepsilon(\tilde{\mathbf{x}}) -  \tilde{g}_i(\tilde{\mathbf{x}}) , & \tilde{g}_i(\tilde{\mathbf{x}}) \in \left[0, \; \varepsilon(\tilde{\mathbf{x}}) \right], \\
            \varepsilon(\tilde{\mathbf{x}}) +  \tilde{g}_i(\tilde{\mathbf{x}}), & \tilde{g}_i(\tilde{\mathbf{x}}) \in \left[-\varepsilon(\tilde{\mathbf{x}}), \; 0 \right), \\
            0, & \mathrm{otherwise}.
        \end{array}
    \right.
    \tag{B.2}
    \label{eq:mub}
\end{align}
We name it the most uncertain boundary (MUB) to indicate that it facilitates a reduction of the uncertainty of global feasible regions more efficiently.
\begin{lemma}
    Given the utility function in \pref{eq:mub}, the expectation of MUB (EMUB) over the predicted distribution of $\tilde g_i(\tilde{\mathbf{x}})$ takes the following form as
    \begin{align}
        \mathrm{EMUB}^i(\tilde{\mathbf{x}})=  & \varepsilon(\tilde{\mathbf{x}}) \Big( \Phi\left(g^+_i(\tilde{\mathbf{x}})\right) -  \Phi\left(g^-_i(\tilde{\mathbf{x}})\right) \Big) \nonumber \\
        & - \sigma_g^i(\tilde{\mathbf{x}}) \Big( 2\phi\left(- \bar g_i(\tilde{\mathbf{x}})\right) \nonumber\\
        & \quad \quad \quad \quad  - \phi\left(g^+_i(\tilde{\mathbf{x}})\right) -\phi\left(g^-_i(\tilde{\mathbf{x}})\right) \Big) \nonumber \\
        & + \mu_g^i(\tilde{\mathbf{x}}) \Big( 2\Phi\left(-\bar g_i(\tilde{\mathbf{x}})\right) \nonumber \\
        & \quad \quad \quad \quad  -\Phi\left(g^+_i(\tilde{\mathbf{x}})\right) - \Phi\left(g^-_i(\tilde{\mathbf{x}})\right) \Big),
        \tag{B.3}
        \label{eq:new_exploration_function}
    \end{align}
where $\bar g_i(\tilde{\mathbf{x}}) = \mu_g^i(\tilde{\mathbf{x}}) / \sigma_g^i(\tilde{\mathbf{x}})$, $g^+_i(\tilde{\mathbf{x}}) = \beta - \mu_g^i(\tilde{\mathbf{x}}) / \sigma_g^i(\tilde{\mathbf{x}})$, $ g_i^-(\tilde{\mathbf{x}}) = -\beta - \mu_g^i(\tilde{\mathbf{x}}) / \sigma_g^i(\tilde{\mathbf{x}})$, $\Phi$ and $\phi$ denote the cumulative distribution function and probability density function of $\mathcal{N}(0, 1)$, respectively. 

\label{lemma:emub}
\end{lemma}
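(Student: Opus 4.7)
The plan is to compute $\mathrm{EMUB}^i(\tilde{\mathbf{x}}) = \mathbb{E}[\mathrm{MUB}^i(\tilde{\mathbf{x}})]$ directly from the piecewise definition in \pref{eq:mub}, where the expectation is taken under $\tilde g_i(\tilde{\mathbf{x}}) \sim \mathcal{N}(\mu_g^i(\tilde{\mathbf{x}}), \sigma_g^{i\;2}(\tilde{\mathbf{x}}))$. Since MUB is supported on $[-\varepsilon,\varepsilon]$ and takes two linear forms on the sub-intervals $[-\varepsilon,0)$ and $[0,\varepsilon]$, the expectation splits naturally as
$$\mathrm{EMUB}^i(\tilde{\mathbf{x}}) = \int_{-\varepsilon}^{0} (\varepsilon + g)\, p(g)\, dg + \int_{0}^{\varepsilon} (\varepsilon - g)\, p(g)\, dg,$$
where $p$ is the Gaussian density with mean $\mu_g^i$ and variance $\sigma_g^{i\;2}$ (I suppress the dependence on $\tilde{\mathbf{x}}$ for brevity). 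Everything else is a change of variables and bookkeeping.

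First I would apply the standardising substitution $u = (g - \mu_g^i)/\sigma_g^i$, which turns $p(g)\,dg$ into $\phi(u)\,du$ and maps the three relevant endpoints $-\varepsilon$, $0$, $\varepsilon$ to $g^-_i(\tilde{\mathbf{x}})$, $-\bar g_i(\tilde{\mathbf{x}})$, $g^+_i(\tilde{\mathbf{x}})$ respectively---precisely the three arguments appearing in the target identity, which is a reassuring sign. The two integrands then become affine in $u$ of the form $(c_\pm + \sigma_g^i u)\phi(u)$, where $c_+ = \varepsilon - \mu_g^i$ on the upper piece and $c_- = \varepsilon + \mu_g^i$ on the lower piece. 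Each integral then reduces to a combination of the two elementary primitives $\int_a^b \phi(u)\,du = \Phi(b) - \Phi(a)$ and $\int_a^b u\phi(u)\,du = \phi(a) - \phi(b)$ evaluated at the endpoints above.

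Finally I would sum the two pieces and group the resulting terms according to whether they carry a factor of $\varepsilon$, $\mu_g^i$, or $\sigma_g^i$. The $\varepsilon$-terms telescope through $-\bar g_i$ to give $\Phi(g^+_i) - \Phi(g^-_i)$; the $\mu_g^i$-terms combine with opposite signs on the two pieces to give $2\Phi(-\bar g_i) - \Phi(g^+_i) - \Phi(g^-_i)$; and the $\sigma_g^i$-terms combine to give $\phi(g^+_i) + \phi(g^-_i) - 2\phi(-\bar g_i)$, i.e.\ the negative of the bracket in \pref{eq:new_exploration_function}. No technical obstacle is anticipated: the entire argument is integration by cases. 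The only point demanding care is the sign bookkeeping, in particular ensuring that the two occurrences of the shared endpoint $-\bar g_i$ combine with matching signs so that the $\mu_g^i$-coefficient comes out as $2\Phi(-\bar g_i) - \Phi(g^+_i) - \Phi(g^-_i)$ rather than a miscollected variant. Writing the substitution explicitly and tabulating coefficients should render this mechanical.
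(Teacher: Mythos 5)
Your proposal is correct and follows essentially the same route as the paper's proof: split the expectation over the two linear pieces of the utility, standardize, and reduce to the primitives $\int_a^b\phi(u)\,du$ and $\int_a^b u\phi(u)\,du=\phi(a)-\phi(b)$. Your identification of the shared interior endpoint as $-\bar g_i(\tilde{\mathbf{x}})$ is in fact the correct one (the paper's intermediate display writes $\bar g_i$ there, a sign slip that its own final formula does not propagate), so the sign bookkeeping you flag as the only delicate point comes out exactly as stated in \pref{eq:new_exploration_function}.
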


\begin{proof}
Let $\tilde{g}_i(\tilde{\mathbf{x}}) = \mu_g^i(\tilde{\mathbf{x}}) + \sigma_g^i(\tilde{\mathbf{x}}) \epsilon$ with $\epsilon \sim \mathcal{N}(0, 1)$. The expectation of the utility function of MUB is derived by

\begin{align}
    & \mathbb{E}_{\tilde{g}_i(\tilde{\mathbf{x}}) \sim \mathcal{N}(\mu_g^i(\tilde{\mathbf{x}}), \sigma_g^i(\tilde{\mathbf{x}}))} \left[ \mathrm{MUB}^i(\tilde{\mathbf{x}}) \right] \nonumber \\
    = & \mathbb{E}_{\epsilon \sim \mathcal{N}(0, 1)} \left[ \mathrm{MUB}^i(\tilde{\mathbf{x}}) \right] \nonumber\\
    =& \int_{\varepsilon \ge \vert  \mu_g^i(\tilde{\mathbf{x}}) + \sigma_g^i(\tilde{\mathbf{x}}) \epsilon \vert} \left( \varepsilon - \vert  \mu_g^i(\tilde{\mathbf{x}}) + \sigma_g^i(\tilde{\mathbf{x}}) \epsilon \vert \right) \phi\left(\epsilon\right) d\epsilon\nonumber\\
    = & \int_{\bar g_i(\tilde{\mathbf{x}})}^{ g^+_i(\tilde{\mathbf{x}})} \left( \varepsilon -   \mu_g^i(\tilde{\mathbf{x}}) - \sigma_g^i(\tilde{\mathbf{x}}) \epsilon  \right) \phi\left(\epsilon\right) d\epsilon \nonumber \\ 
    & + \int_{ g_i^-(\tilde{\mathbf{x}})}^{\bar  g_i(\tilde{\mathbf{x}})} \left( \varepsilon +   \mu_g^i(\tilde{\mathbf{x}}) + \sigma_g^i(\tilde{\mathbf{x}}) \epsilon  \right) \phi\left(\epsilon\right) d\epsilon \nonumber \\
    = & \varepsilon \left( \Phi\left(g_i^+(\tilde{\mathbf{x}})\right) -  \Phi\left(g_i^-(\tilde{\mathbf{x}})\right) \right) \nonumber\\
    & + \mu_g^i(\tilde{\mathbf{x}}) \left( 2\Phi(\bar g_i(\tilde{\mathbf{x}})) -\Phi(g_i^+(\tilde{\mathbf{x}})) - \Phi(g_i^-(\tilde{\mathbf{x}})) \right) \nonumber \\
    & + \sigma_g^i(\tilde{\mathbf{x}}) \left(\int_{g_i^-(\tilde{\mathbf{x}})}^{\bar g_i(\tilde{\mathbf{x}})} \epsilon \phi\left(\epsilon\right) d\epsilon - \int_{\bar g_i^+(\tilde{\mathbf{x}})}^{g_i^-(\tilde{\mathbf{x}})} \epsilon \phi\left( \epsilon \right) d\epsilon  \right). \tag{B.4}
\end{align}
Note the following result holds:
\begin{align}
    \int_{a}^{b} x \phi(x) dx = \phi(a) - \phi(b). \tag{B.5}\label{eq:epob_lemma}
\end{align}
Therefore, the \pref{eq:new_exploration_function} holds.
\end{proof}

Based on \pref{lemma:emub}, we propose a new exploration function as follows:
\begin{equation}
    \rho^i(\tilde{\mathbf{x}}, \gamma^i) = \frac{\mathrm{EMUB}^i(\tilde{\mathbf{x}}) }{\gamma^i},
    \tag{B.6}
    \label{eq:exploration_function_emub}
\end{equation}
where $\gamma^i>0$ denotes a scale parameter. 

\begin{figure*}[thb]
\centering
\includegraphics[width=\linewidth]{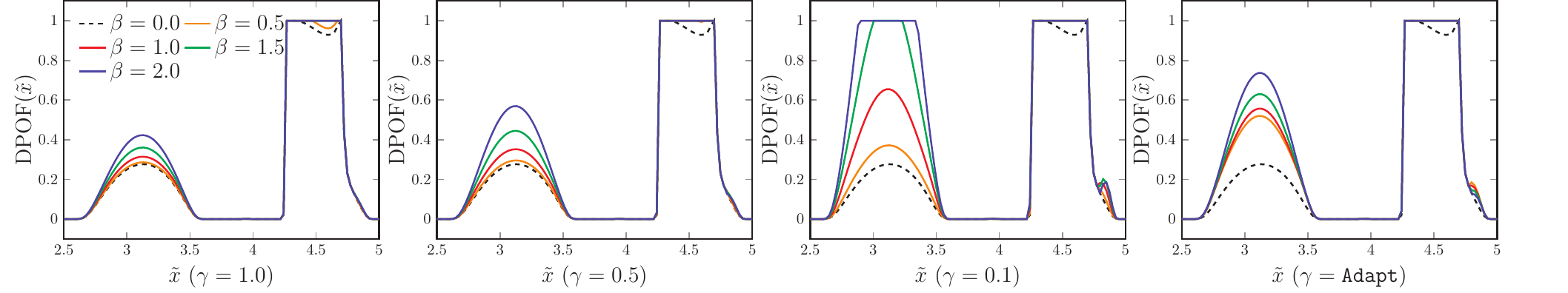}
\figuretag{B-1}
\caption{Illustration of different exploration ability of DPOF with \pref{eq:new_exploration_function} and different settings of $\beta$ and $\gamma$.}
\label{fig:app_eicb}
\end{figure*}


As depicted in \pref{fig:app_eicb}, both $\gamma^i$ and $\beta$ can effect the exploration ability of \pref{eq:new_exploration_function}. To mitigate the complexity of configuration, we introduce an adaptation law of $\gamma^i$ as
\begin{equation}
    \gamma^i = \texttt{Adapt}(\tilde{\mathbf{x}}) = \max_{\tilde{\mathbf{x}}} \mathrm{EMUB}^i(\tilde{\mathbf{x}}) \Phi_g^i(\lambda).\tag{B.7}
\end{equation}
This law enables $\gamma$ to scale $\mathrm{EMUB}^i(\tilde{\mathbf{x}})$ within $[0, 1]$. As shown in \pref{fig:app_eicb}, \texttt{Adapt} leads to a more reasonable strategy of weight allocation. 
\begin{remark}
The dissections to each term in \pref{eq:new_exploration_function} were made in \cite{RanjanBM08} where the utility function was slightly different from  \pref{eq:mub} (they used a squared form as $\varepsilon^2(\tilde{\mathbf{x}}) - \tilde{g}_i^2(\tilde{\mathbf{x}})$ However, the derivation made by Ranjan et al. \cite{RanjanBM08} was incorrect \cite{BectGLPV12}). In general, the first and third terms in \pref{eq:new_exploration_function} suggest candidates on the most uncertainty boundaries, while the second term in \pref{eq:new_exploration_function} suggests candidates from the interior of regions that remain uncertain due to limited observations.
\end{remark}
\begin{remark}
    The alternative exploration function performs more aggressively because it explicitly contains the standard deviation $\sigma_g^i$ that facilitates a reduction of global uncertainty of feasible regions. Unlike the POB and  \pref{eq:exploration_function}, EMUB is no longer a conservative design and may assign a high weight to a candidate solution that has low POF.
\end{remark}

\subsection*{B.3 Comparative study of EI-based CBO methods}
Since this work does not aim at calibrating the involved hyperparameters, such as $\gamma^i$ and $\beta$, for better performance in individual problems, we compare three EI-based CBO methods with fixed parameter settings: $1)$ \texttt{EIC} without hyperparameters, $ii)$ \texttt{EICB-POB} using  \pref{eq:exploration_function} with $\beta=1.96$ (i.e. the $95\%$ confidence level), and $iii)$ \texttt{EICB-MUB} using  \pref{eq:exploration_function_emub} with $\beta=1.96$ and $\gamma^i$ determined by \texttt{Adapt}. Note that for \texttt{EICB-EMUB}, $\beta$ and $\gamma^i$ are coupled. Despite with \texttt{Adapt}, different values of $\beta$ can still result in distinct optimization trajectories. This is one reason for our recommendation for  \pref{eq:exploration_function} as the exploration function, without paying much effort on parameter selections.

We conduct four experiments in S1: $5$D KBF, $10$D KBF, XGB-H, and Lunar, as presented in Section \ref{sec:problems} of the benchmark suite. The results are shown in  \pref{fig:app_exp}. It is observed that, for KBF, \texttt{EICB-MUB} does not outperform \texttt{EIC}. Differently, for XGB-H and Lunar, \texttt{EICB-MUB} is competitive with \texttt{EICB-POB}. Furthermore, it is noticed that the value deviation of \texttt{EICB-MUB} is larger than \texttt{EICB-POB} during optimization processes, which suggests that more exploration is introduced in \texttt{EICB-MUB}. However, this aggressive design can lead to inefficiency for some problems such as KBF and XGB-H, which is another reason for our preference of using  \pref{eq:exploration_function}. Note that we can expect improvements of \texttt{EICB-MUB} by appropriately tuning $\beta$ and $\gamma^i$, which is however out of the scope of this work.


\begin{figure*}[thb]
\centering
\includegraphics[width=\linewidth]{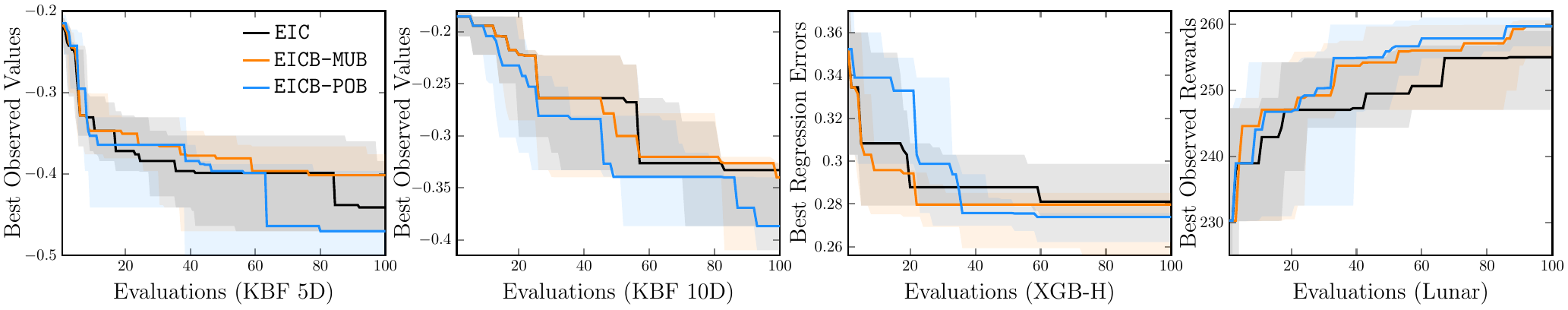}
\figuretag{B-2}
\caption{Optimization trajectories of different tasks using EI-based methods in S1.}
\label{fig:app_exp}
\end{figure*}

\section*{Section C. Expectation Propagation for HLGP}
\label{sec:ep_app}

The posterior of the HLGP can be calculated by:
\begin{equation}
    p(\tilde{\mathbf{g}}_i \vert X, \mathbf{g}_i) =
    \frac{1}{Z} p(\tilde {\mathbf{g}}_i\vert X) 
    \prod_{k=1}^N p( {g}^k_i\vert \tilde{g}_i^k),
    \label{eq:posteriorGP_app}
    \tag{C.1}
\end{equation}
where $Z$ is the normalization factor. For the $k$-th observation $g_i^k$, EP assigns it an un-normalized Gaussian distribution $t_i^k \triangleq \tilde{Z}_i^k\mathcal{N}\left( \tilde\mu_i^k, \tilde\sigma_i^{k\;2} \right)$ to locally approximate its exact likelihood. In this vein, the posterior is approximated by:
\begin{gather}
    p(\tilde{\mathbf{g}}_i \vert X, \mathbf{g}_i) \approx  \frac{1}{Z_{\mathrm{EP}}} p(\tilde{\mathbf{g}}_i\vert X) \prod_{k=1}^N t_i^k = \mathcal{N}(\bm{\mu}_g^i, \Sigma_g^i) \nonumber \\
    \text{with }~\bm{\mu}_g^i= \Sigma_g^i {\tilde{\Sigma}_{g}}^{i\;-1} \tilde{\bm{\mu}}_g^i ~\text{ and }~ \Sigma_g^i = (K + {\tilde{\Sigma}_{g}}^{i\;-1})^{-1},
    \label{eq:posteriorEP_app}
    \tag{C.2}
\end{gather}
where $\tilde{\bm{\mu}}_g^i = \left(\tilde\mu_i^1,\cdots,\tilde\mu_i^N\right)^\top$, $\tilde\Sigma_g^i$ denotes a diagonal matrix with the $k$-th element $\tilde{\sigma}_i^{k\;2}$, and $Z_{\mathrm{EP}}$ is the marginal likelihood. For HLGP, the site parameters $\tilde{Z}_i^k$, $\tilde\mu_i^k$ and $\tilde\sigma_i^k$ in $t_i^k$ are determined by the following laws.
\begin{itemize}
    \item \emph{Law 1.} The site parameters of an exact Gaussian likelihood $\mathcal{N}(g_i(\mathbf{x}^k), \sigma^2)$ are directly assigned by 
    \begin{equation}
        \tilde{Z}_i^k = 1,\quad \tilde{\mu}_i^k = g_i(\mathbf{x}^k),\quad \tilde{\sigma}_i^k = \sigma. \tag{C.3}
    \end{equation}
    \item \emph{Law 2.} The site parameters of a non-Gaussian likelihood are computed by the moment match \cite{RiihimakiV10}. First, the marginal for $\tilde{g}_i^k$ from \eqref{eq:posteriorEP_app} is $\mathcal{N}(\bar \mu^i_k, \bar \sigma^{i\;2}_{k})$, where $\bar \mu^i_k$ and $\bar \sigma^{i\;2}_{k}$ denote the $i$-th element of $\bm{\mu}_g^i$ and $i$-th diagonal element of $\Sigma_g^i$. Then, the cavity parameters, $\bar \mu^i_{-k}$ and $\bar \sigma^{i\;2}_{-k}$, can be computed by
    \begin{equation}
    \begin{split}
        \bar \mu^i_{-k} = &\bar \sigma^{i\;2}_{-k} \left( {\bar \sigma^{i\;-2}_{k}} \bar \mu_k^i - \tilde{\mu}_i^k \tilde{\sigma}_{i}^{k\;-2}\right), \\
        \bar \sigma^{i\;2}_{-k} =& \left(\bar \sigma^{i\;-2}_{k} - \tilde{\sigma}_i^{k\; -2} \right)^{-1}.
    \end{split}
    \tag{C.4}
    \end{equation}
    The desired moments on the true likelihood are:
    \begin{gather}
         \hat \mu_i^k =  \bar \mu^i_{-k} + \frac{\bar \sigma^{i\;2}_{-k} \phi(z_i^k)}{\Phi(z_i^k)\sqrt{\alpha^2 + \bar \sigma^{i\;2}_{-k}}}, \nonumber\\
        \hat \sigma_i^{k\;2} =  \sigma^{k\;2}_{-i} - \frac{\bar \sigma^{i\;4}_{-k}\phi({z_i^k})}{\left(\alpha^2 + \bar \sigma^{i\;2}_{-k}\right) \Phi(z_i^k)} \left(z_i^k + \frac{\phi({z_i^k})}{\Phi({z_i^k})} \right), \nonumber \\
        \hat{Z}_i^k =  \Phi(z_i^k),
    \tag{C.5}
    \end{gather}
    where $z_i^k= \frac{\bar \mu^i_{-k}}{\sqrt{\alpha^2 +\bar \sigma^{i\;2}_{-k}}}$.
    Henceforth, the site parameters can be computed by matching the above moments. Mathematically, it means
    \begin{equation}
    \begin{split}
        \tilde \mu_i^k = & \tilde\sigma_i^{k\;2} \left(\hat \sigma_i^{k\;-2}\hat\mu_i^k - \sigma_{-i}^{k\;-2}\mu^k_{-i}\right),\\
        \tilde\sigma_i^{k\;2} = & \left(\hat\sigma_i^{k\;2} - \sigma_{-i}^{k\;-2}\right)^{-1}.
    \end{split}
    \tag{C.6}
    \end{equation}
    As the un-normalized term $\tilde{Z}_i^k$ does not affect the modeling result, we omit its computation here. Detailed derivation of above processes is referred to \cite{GPML,RiihimakiV10}.

    \item \emph{Law 3.} Repeatedly compute the above two laws for $k = 1,\dots,N$, until all values of the site parameters converge. 
\end{itemize}

\begin{remark}
Computational instability and invalid operation may occur when updating the above parameters. We suggest several ways to alleviate this phenomenon. The first way is using calculation tricks with stronger numerical stability, refer to \cite{GPML} for practical implementation of an EP algorithm. When non-singularity exists, one can fix the covariance matrices by manipulating the eigenvalues, see the implementation of \cite{MarcoBKHRT21}. Moreover, alternative choices of the Non-Gaussian likelihoods, e.g., step function studied in \cite{garnett_bayesoptbook_2023}, can be more effective for a description of the truncated distribution.
\end{remark}

\section*{Section D. Experiment Settings}
\label{sec:experiment}

\subsection*{D.1 Algorithm Implementation}
\subsubsection*{CBO methods} 
In our project, we implement all the algorithms on two widely used platforms, namely Gpflow \footnote{\href{https://gpflow.github.io/GPflow}{https://gpflow.github.io/GPflow}} and GPy \footnote{\href{https://gpy.readthedocs.io}{https://gpy.readthedocs.io}}. The \texttt{EIC} algorithm has been officially presented in Trieste \cite{MatthewsWNFBLGH17}. The implementations of \texttt{TSC}\footnote{\href{https://github.com/pytorch/botorch}{https://github.com/pytorch/botorch}} \cite{ErikssonP21} and \texttt{MESC}\footnote{\href{https://github.com/takeuchi-lab/CMES-IBO}{https://github.com/takeuchi-lab/CMES-IBO}} \cite{TakenoTSK22} are planted from their official open-source projects. For \texttt{MESC}, we choose to use $20$ samples with a $1\,000$ grid in Thompson sampling. For \texttt{TSC}, we fix the grid number as $1\,000$. In our experiments, when increasing the number of samples and grids, we did not observe a significant improvement in the optimization efficiency. For \texttt{CBOB}, we fix $\beta=1.96$ for $95\%$ confidence level. Note that tuning $\beta$ will not lead to significantly different behaviors of \texttt{CBOB} with the conservative exploration function designed in \pref{eq:exploration_function}.

\subsubsection*{Gaussian processes} 
Uniformly, the GPR models are used to build the surrogate models of objective functions, where observations are sequentially obtained with a mask on infeasible solutions. We use GPR, GPC, and HLGP models to build models of constraints for specific algorithms and tasks. All GP models take a constant mean function and the Mat\'ern $5/2$ kernel. The GPC models are created by the default builder of Gpflow/GPy with initial observations \cite{MatthewsWNFBLGH17} and optimized by the variational inference or EP. For HLGPs, we first modify the raw observations using the EP algorithm in consideration of different likelihood functions. Then the GP model with heterogeneous noises is utilized.

\subsubsection*{More configurations} 
All acquisition functions are optimized by the L-BFGS-B method with $1\,000$ iterations. The hyperparameters in all GP models are optimized according to the batch optimizer embedded in GPflow and GPy. All illustrative examples and experiments are performed on a desktop with Intel(R) Xeon(R) CPU E5-2620 v4 (2.10GHz) and NVIDIA GeForce GTX 1080Ti GPU.

\subsection*{D.2 Synthetic Benchmark}
This section delineates the configurations of individual synthetic benchmark problems. For brevity, we present a comprehensive table (\pref{tab:synthetic_bench}) including the basic settings of each problem regarding black-box optimization.

\begin{table}[t]
    \centering
    \begin{tabular}{|c|c|c|c|c|}
    \hline
       Problem  &  Dimension  & Objective & constraint & search space \\
    \hline
       \multirow{2}*{KBF \cite{Keane94}}  &  \multirow{2}*{$n=10$, $m=2$} &  \multirow{2}*{$- \vert \frac{\sum_{i=1}^{10}\cos^4(x_i) - 2\Pi_{i=1}^{10}\cos^2(x_i)}{\sqrt{\sum_{i=1}^{10} i x_i^2}} \vert$} & $ 0.75 - \Pi_{i=1}^{10} x_i < 0$ & \multirow{2}*{$[0, 10]^{10}$} \\
       ~  &  ~ & ~ & $\sum_{i=1}^{10} x_i - 75 < 0$ & ~ \\
       \hline
       Ackley \cite{ErikssonP21}  &  $n=10$, $m=1$ & $a=20$, $b=0.2$, $c=2\pi$ & $\sum_{i=1}^{10} x_i< 0$ & $ [-5, 5]^{10}$ \\
       \hline
       \multirow{2}*{WBD \cite{Deb00}}  &  \multirow{2}*{$n=4$, $m=5$} &  \multicolumn{2}{|c|}{\multirow{2}*{please refer to \cite{Deb00}}}  & $x_1,x_4 \in [0.125, 5]$ \\
       ~ &  ~ &  \multicolumn{2}{|c|}{~} & $x_2,x_3 \in [0.1, 10]$ \\
       \hline
       \multirow{3}*{PVD \cite{CoelloM02}}  &  \multirow{3}*{$n=4$, $m=4$} &  \multicolumn{2}{|c|}{\multirow{3}*{please refer to \cite{CoelloM02}}}  & $x_1,x_2 \in [0, 20]$ \\
       ~ &  ~ &  \multicolumn{2}{|c|}{~} & $x_3 \in [10, 50]$ \\
       ~ &  ~ &  \multicolumn{2}{|c|}{~} & $x_4 \in [150, 200]$ \\
    \hline
    \end{tabular}
    \caption{Configurations of synthetic benchmark problems.}
    \label{tab:synthetic_bench}
\end{table}

\subsubsection*{KBF}
The Keane bump function (KBF) is a synthetic constrained optimization problem for testing constrained optimization methods \cite{Keane94}. It consists of one objective function and two constraint functions. We consider the $10$D KBF with the search space as $[0, 10]^{10}$.

\subsubsection*{Ackley}
This problem is studied in \cite{ErikssonP21}. The Ackley function is used with the recommended variable values\footnote{\href{https://www.sfu.ca/~ssurjano/ackley.html}{https://www.sfu.ca/$\sim$ssurjano/ackley.html}}. The single constraint considered in this problem is $\sum_{i=1}^{10} x_i \leq 0$. Besides, the search space is given by $\left[-5.0, 5.0 \right]^{10}$, and the optimum is located at the origin of the axes such that $f(\vec{0}) = 0$.

\subsubsection*{WBD}
The standard definition of the welded beam design problem is given in \cite{Deb00}. We use the search space $x_1, x_4 \in [0.125, 5]$ and $x_2, x_3 \in [0.1, 10]$, which is also used in \cite{CoelloM02}. \texttt{EIC} finds a good solution within $20$ FEs, but will be stagnated in the following evaluations. Differently, \our\ has less efficiency in the early stage, but can gradually find a good solution without prolonged stagnation.

\subsubsection*{PVD}
The standard definition of the pressure vessel design problem can be found in \cite{CoelloM02}. The search space is adopted from \cite{ErikssonP21} as $x_1 \in [0, 20 ]$, $x_2 \in [0, 20 ]$, $x_3 \in [10, 50]$ and $x_4 \in [150, 200]$. To deal with the discrete variables $x_1$ and $x_2$, we uniformly round them to be integers. In this experiment, \texttt{MESC}$_c$ fails to find a better solution within $100$ function evaluations, therefore omitted from the final results.

\subsection*{D.3 Real-World Benchmark}
We consider two kinds of real-world problems, i.e., the hyperparameter optimization (HPO) for a good machine learning model and the reinforcement learning for a better controller. Therein, we consider the model size and energy consumption as the analytically unknown constraints. The constraint thresholds are determined by the mean values of the model size or total energy after $2\,000$ repetitive experiments with randomly sampled solutions. Therefore, about $50\%$ solutions of the initial evaluations are infeasible. A brief table (\pref{tab:real_world_bench}) is given with basic configurations of each problem.

\begin{table}[t]
    \centering
    \begin{tabular}{|c|c|c|c|c|c|}
    \hline
       Problem  &  Dimension  & Objective & constraint & threshold & search space\\
    \hline
    \multirow{7}*{XGB-H}  &  ~  & \multirow{7}*{MSE} & \multirow{7}*{model size} & \multirow{7}*{$50\,000$} & learning rate: $[2^{-10}, 1]$ (log)\\
    ~ &  ~  & ~ & ~ & ~ & maximum depth: $[1, 15]$ (Int) \\
    ~ &  $n=7$  & ~ & ~ & ~ & subsample ratio: $[0.01, 1]$ \\
    ~ &  ~  & ~ & ~ & ~ & L2 regularization: $[2^{-10}, 2^{10}]$ (log)\\
    ~ &  $m=1$  & ~ & ~ & ~ & L1 regularization: $[2^{-10}, 2^{10}]$ (log)\\
    ~ &  ~  & ~ & ~ & ~ & minimum weight sum: $[1, 2^7]$ (log)\\
    ~ &  ~  & ~ & ~ & ~ & estimator number: $[1, 2^8]$ (log, Int)\\
    \hline
    \multirow{8}*{MLP-D}  &  ~  & ~ & \multirow{8}*{model size} & \multirow{8}*{$107\,000$} & learning rate: $[10^{-5}, 1]$ (log)\\
        ~ &  ~  & ~ & ~ & ~ & hidden layer 1: $[2^2, 2^8]$ (log, Int) \\
    ~ &  ~ & fraction of  & ~ & ~ & hidden layer 2: $[2^2, 2^8]$ (log, Int) \\
    ~ &  $n=8$ & correctly  & ~ & ~ & batch size: $[2^2, 2^8]$ (log, Int)\\
    ~ &  $m=1$ & classified  & ~ & ~ & L2 regularization: $[10^{-8}, 10^{-3}]$ (log)\\
    ~ &  ~ & samples & ~ & ~ & Adam decay rate 1: $[0, 0.9999]$ (log)\\
    ~ &  ~  & ~ & ~ & ~ & Adam decay rate 2: $[0, 0.9999]$ (log)\\
    ~ &  ~  & ~ & ~ & ~ & tolerance: $[10^{-6}, 10^{-2}]$ (log)\\
    \hline
    Lunar &  $\begin{array}{c} n=12 \\ m=1 \end{array}$  & reward & fuel cost & $40.0$ & tolerance: $[0, 2]^{12}$\\
    \hline
    Swim. & $\begin{array}{c} n=16 \\ m=1 \end{array}$ & reward & $\begin{array}{c} \text{energy} \\ \text{cost} \end{array}$ & 1.2 & $[-1, 1]^16$\\
    \hline
    \end{tabular}
    \caption{Configurations of synthetic benchmark problems.}
    \label{tab:real_world_bench}
\end{table}

\subsubsection*{XGB-H}
The California housing is a regression problem with $20\,640$ samples. The dataset is prepared with a $0.25$ train/test split in scikit-learn. The XGBoost regressor\footnote{\href{https://github.com/dmlc/xgboost/}{https://github.com/dmlc/xgboost/}} is used to fit the dataset, for which we optimize $7$ configurable parameters with different algorithms. The parameters include: learning rate $[2^{-10}, 1]$ (log), maximum depth $[1, 15]$ (Int), subsample ratio of columns $[0.01, 1]$, L2 regularization term $[2^{-10}, 2^{10}]$ (log), L1 regularization term $[2^{-10}, 2^{10}]$ (log), minimum sum of instance weight in a child $[1, 2^{7}]$ (log), and number of estimators $[1, 2^8]$ (log, Int). We round the solutions for integer parameters. With $2\,000$ repetitive experiments, the model size threshold is $50\,000\mathrm{bytes}$.

\subsubsection*{MLP-D}
The digits dataset contains $1\,797$ images of handwriting digits from $1$-$10$. It is a common classification problem and prepared with a $0.25$ train/test split in scikit-learn. We use a Multi-layer Perceptron (MLP) classifier to fit the dataset, for which we optimize $8$ configurable parameters with different algorithms. The parameters include: initial learning rate $[10^{-5}, 1]$ (log), two sizes of the hidden layers $[2^2, 2^8]^2$ (log, Int), batch size $[2^2, 2^8]$ (log, Int), L2 regularization term $[10^{-8}, 10^{-3}]$ (log), two exponential decay rates in Adam $[0, 0.9999]^2$, tolerance for the optimization $[10^{-6}, 10^{-2}]$ (log),. With $2\,000$ repetitive experiments, the model size threshold is $107\,000\mathrm{bytes}$.

\subsubsection*{Lunar}
In this experiment, we should design a controller that renders a rocket to land on the target position with more reward determined by $i)$ the distance to the target, $ii)$ fuel consumption, and $iii)$ whether crashed. As studied in \cite{ErikssonPGTP19}, a heuristic controller with $12$ parameters can be designed to control the rocket. We use 'LunarLander-v2' in simulations, where the fuel consumption is separated as an energy constraint. In this task, the fuel consumption should be less than $40.0$. The search space is $[0, 2]^{12}$. To create a deterministic environment, we fix the terrain and initial state of the rocket.

\subsubsection*{Swimmer}
In this experiment, the goal is to control a swimmer to move as fast
as possible towards one direction. In the MujoCo environment \cite{TodorovET12}, we use 'swimmer-v4' which contains $8$D output state space and $2$D action space. As designed in \cite{WangFT20}, the linear feedback controller is utilized, resulting in $16$ parameters in the control gain matrix to be determined. Similarly, we separate the control cost, which was considered in the total reward, as an energy constraint. The constraint threshold is set as $1.2$. The search space is set as $[-1, 1]^{16}$. To obtain a robust controller against small noises, in each evaluation, the simulation runs $5$ times with different initialization seeds.

\section*{E. Further Discussions}
\label{sec:further}

\subsection*{E.1 Limitations}
Our investigations of EICB have two limitations. First, we only study the asymptotic convergence of EICB, from which we cannot give definite answers on whether EICB outperforms EIC at the convergence rate in theory. On the other hand, the empirical design of exploration functions needs to be further studied for better guidance to the practitioners. A promising improvement is a further balancing design considering risks and robustness, regarding infeasible evaluations or aggressive exploration. For safety-critical scenarios, EICB should be capable of positing the quality of candidate solutions.

In addition to EICB, the design of HLGP models can be further improved in consideration of the uncertainty on observed solutions. Specifically, the HLGP in its current form cannot eliminate the uncertainty of an evaluated infeasible point, as depicted in  \pref{fig:HLGP_EP}. Note that GPC also fails to eliminate the uncertainty. Differently, a classifier such as SVM or MLP with appropriate model capacity can easily achieve this goal. We believe a more complex design of the surrogate models can be leveraged to better exploit the mixed observations.

\subsection*{E.2 Potential Impact}
The proposed DPOF is a new constraint-handling technique that can be easily integrated with non-negative acquisition functions, such as EI, probability of improvement, parzen estimator, and density-ratio estimation. With other acquisition functions, the theoretical analysis on both convergence and regret bound may be conducted, potentially leading to a more principled design of DPOF. Besides, our work gives criteria for the research focusing on the adaptive design of CBO methods by leveraging information from similar tasks for better optimization with unknown constraints.

\end{document}